%% file: main.tex
\documentclass{article}

\usepackage{microtype}
\usepackage{graphicx}
\usepackage{subcaption}
\usepackage{makecell}
\usepackage{bbm}
\usepackage{booktabs} 

\usepackage{hyperref}

\usepackage[preprint]{icml2026}

\usepackage{amsmath}
\usepackage{amssymb}
\usepackage{mathtools}
\usepackage{amsthm}
\usepackage{tcolorbox}

\usepackage[capitalize,noabbrev]{cleveref}

\theoremstyle{plain}
\newtheorem{theorem}{Theorem}[section]
\newtheorem{proposition}[theorem]{Proposition}

\theoremstyle{definition}

\theoremstyle{remark}

\usepackage[textsize=tiny]{todonotes}
\usepackage{booktabs}
\usepackage{multirow}
\usepackage{array}

\icmltitlerunning{Epigraph-Guided Flow Matching for Safe and Performant Offline RL}

\begin{document}

\twocolumn[
  \icmltitle{Epigraph-Guided Flow Matching for Safe and Performant Offline Reinforcement Learning}



  \icmlsetsymbol{equal}{*}

  \begin{icmlauthorlist}
    \icmlauthor{Manan Tayal}{equal,tau}
    \icmlauthor{Mumuksh Tayal}{equal,tau}
  \end{icmlauthorlist}

  \icmlaffiliation{tau}{TAU Intelligence}

  \icmlcorrespondingauthor{Manan Tayal}{tayalmanan28@gmail.com}

  \icmlkeywords{Machine Learning, ICML}

  \vskip 0.3in
]



\printAffiliationsAndNotice{\icmlEqualContribution}

\begin{abstract}
  Offline reinforcement learning (RL) provides a compelling paradigm for training autonomous systems without the risks of online exploration, particularly in safety-critical domains. However, jointly achieving strong safety and performance from fixed datasets remains challenging. Existing safe offline RL methods often rely on soft constraints that allow violations, introduce excessive conservatism, or struggle to balance safety, reward optimization, and adherence to the data distribution. To address this, we propose Epigraph-Guided Flow Matching (EpiFlow), a framework that formulates safe offline RL as a state-constrained optimal control problem to co-optimize safety and performance. We learn a feasibility value function derived from an epigraph reformulation of the optimal control problem, thereby avoiding the decoupled objectives or post-hoc filtering common in prior work. Policies are synthesized by reweighting the behavior distribution based on this epigraph value function and fitting a generative policy via flow matching, enabling efficient, distribution-consistent sampling. Across various safety-critical tasks, including Safety-Gymnasium benchmarks, EpiFlow achieves competitive returns with near-zero empirical safety violations, demonstrating the effectiveness of epigraph-guided policy synthesis.
\end{abstract}

\section{Introduction}
\label{section: introduction}
\input{sections/1_intro}
\vspace{-0.8em}
\section{Related Works}
\label{section: related_work}

\input{sections/2_related_work}

\section{Background and Problem Setup}
\label{section: background}

\input{sections/3_background}

\section{Learning the Epigraph Value Function}
\label{section: method}

\input{sections/4_methodology}

\section{Policy Synthesis via Epigraph-Guided Flow Matching}
\label{section: policy-synthesis}
\input{sections/5_policy_syn}

\section{Experiments}
\label{section: experiments}
\input{sections/6_experiments}
\vspace{-0.8em}
\section{Conclusion and Future Work}
\label{section: conclusions}
\input{sections/7_conclusion}




\section*{Impact Statement}

This work introduces a robust methodology for co-optimizing safety and performance in autonomous systems within the offline reinforcement learning paradigm. By utilizing an epigraph-based reformulation to transform hard state constraints into a tractable optimization problem, our framework enables the extraction of reliable policies from pre-collected, static datasets without the risks associated with online exploration. The integration of weighted Flow Matching ensures that the synthesized controllers remain strictly faithful to the safe regions of the data distribution, effectively mitigating the dangers of over-estimation in safety-critical environments. These advancements provide a scalable path for deploying reliable agents in high-dimensional domains, such as robotics and autonomous mobility, where learning exclusively from demonstrations is a prerequisite for operational trust and public safety.


\bibliography{ref}
\bibliographystyle{icml2026}

\newpage
\appendix
\onecolumn

\input{sections/Appendix}

\end{document}

%% file: sections/1_intro.tex
Autonomous systems are increasingly deployed in safety-critical domains such as self-driving vehicles \citep{kiran2021deep}, robotics, aerospace, and healthcare \citep{gottesman2019guidelines}. Designing control algorithms for these systems requires balancing two competing objectives: achieving high task performance under practical constraints while ensuring safety to prevent severe failures. These objectives are inherently in tension, as aggressively optimizing performance can increase the risk of unsafe behavior.

Reinforcement learning (RL) \citep{sutton1999reinforcement} has emerged as a powerful framework for learning complex control policies directly from interaction data. However, in safety-critical settings, unrestricted online exploration can be prohibitively risky. Constrained reinforcement learning (CRL) methods seek to incorporate safety requirements into policy optimization, commonly through Lagrangian relaxations or runtime shielding mechanisms \citep{10.5555/3305381.3305384,altman2021constrained,alshiekh2018safe,Zhao2023SafeRL}. While effective in simulated environments, most CRL approaches rely on extensive online interaction, which is often impractical for real systems due to safety concerns or the absence of accurate simulators \citep{liu2024dsrl}. These challenges have spurred growing interest in offline RL and imitation learning, where policies are learned solely from pre-collected datasets \citep{kumar2020CQL}. By eliminating the need for online exploration, offline RL \citep{levine2020offRL} provides a promising pathway for deploying learning-based controllers in safety-critical domains.

Despite this promise, achieving reliable safety in offline RL remains difficult. Many existing safe offline RL methods enforce safety through expected cumulative costs or penalties \citep{xu2022constraints,pmlr-v119-stooke20a}, effectively treating safety as a soft constraint. Such formulations allow non-zero violation risk and are therefore unsuitable for applications that require state-wise safety guarantees, where even a single violation is unacceptable. Moreover, jointly optimizing performance and safety from static datasets often leads to unstable learning dynamics or overly conservative policies, particularly when unsafe transitions are rare or entirely absent from the data \citep{lee2022coptidice}.

A more rigorous alternative arises from classical optimal control, where safety and performance are jointly addressed by formulating the problem as a \emph{state-constrained optimal control problem} (SC-OCP). In this setting, safety is encoded as a hard state constraint, while performance is captured through a reward or cost functional. Despite its strong theoretical foundations, solving SC-OCPs in practice is difficult: the existence and characterization of optimal solutions typically rely on stringent controllability conditions \citep{doi:10.1137/0324032}. To mitigate these difficulties, \cite{altarovici2013general} introduced an epigraph-based reformulation that characterizes the SC-OCP value function by computing its epigraph through dynamic programming, leading to a Hamilton–Jacobi–Bellman (HJB) partial differential equation. The original value function and the associated optimal policy can then be recovered from this epigraph representation.
Nevertheless, dynamic programming methods are fundamentally limited by the curse of dimensionality, rendering them impractical for high-dimensional systems when using conventional numerical solvers \citep{mitchell2004toolbox, Hao2024csl}. The epigraph reformulation further augments the state space, compounding the computational burden and limiting applicability to low-dimensional settings. Although various approximation and acceleration strategies have been proposed, they often require restrictive assumptions on system structure or dynamics \citep{chow2017algorithm}. Consequently, computing solutions to the resulting HJB equations for general nonlinear, high-dimensional control systems remains an open and challenging problem.

 In this work, we bridge the gap between rigorous state-constrained control and scalable offline RL by introducing \emph{Epigraph-Guided Flow Matching (EpiFlow)}. We circumvent the need for solving high-dimensional HJB-PDEs by deriving a data-driven, Bellman-style recursion for an auxiliary epigraph value function. This function acts as a feasibility-aware value function, capturing the maximum achievable performance threshold that remains compatible with future safety. By learning this function directly from transitions, we enable the agent to reason about long-horizon safety and performance without explicit knowledge of the system dynamics.
 To extract executable policies, we leverage Flow Matching as a scalable generative representation. Unlike diffusion-based policies that require expensive iterative sampling, Flow Matching enables efficient policy synthesis via a single deterministic ODE integration. Our framework guides this generative process using the learned epigraph value function, ensuring that synthesized actions remain strictly within the safe support of the offline distribution while maximizing return. Our approach targets high-confidence feasibility grounded in the data distribution, providing an empirical yet robust safety certificate for deployment in complex environments.

We evaluate the proposed approach across a range of safety-critical benchmarks, including low-dimensional navigation tasks and high-dimensional Safety Gymnasium environments \citep{ji2023safety}. Our results show that epigraph-guided synthesis achieves competitive returns with near-zero empirical safety violations, significantly outperforming existing dual-objective and filter-based baselines.

The main contributions of this work are:
\begin{itemize}
    \item We propose \emph{Epigraph-guided Flow Matching (EpiFlow)}, a framework which casts safe offline RL as a state-constrained optimal control problem, providing a principled path to co-optimize performance and hard safety constraints.
    \item We derive a Bellman-style recursion for an auxiliary epigraph value function, enabling the characterization of safe performance envelopes directly from offline data without solving intractable HJB-PDEs.
    \item We propose a weighted Flow Matching objective that guides generative policy learning toward safe, high-performing regions of the data distribution, ensuring efficient and distribution-consistent sampling in continuous action spaces.
    \item We demonstrate the effectiveness of EpiFlow, across a wide range of safety-critical benchmarks, including high-dimensional Safety Gymnasium tasks, showing near-zero safety violations while maintaining high task returns compared to state-of-the-art safe offline RL methods.
\end{itemize}

\nocite{tayal2025physics, tayal2024learning, so2024solving}

%% file: sections/2_related_work.tex
\paragraph{Safe Offline Reinforcement Learning.}
Safe reinforcement learning has traditionally been studied in the online setting using Lagrangian-based constrained optimization or trust-region methods \citep{chow2017risk,tessler2018reward,pmlr-v119-stooke20a,10.5555/3305381.3305384}. These approaches regulate safety through soft cost penalties and typically require online interaction, motivating a shift toward safe offline reinforcement learning. Early offline methods such as CPQ penalize unsafe or out-of-distribution actions \citep{xu2022constraints}, but this can distort value estimates and hinder generalization \citep{li2023when}. Distribution-matching approaches such as COptiDICE \citep{lee2022coptidice}, building on DICE objectives \citep{lee2021optidice}, frame constrained policy optimization through stationary distribution correction, but inherit instability and sensitivity associated with residual-gradient learning \citep{baird1995residual}. More broadly, several offline RL methods mitigate extrapolation by constraining policies to remain close to the behavior distribution, including advantage-weighted and distribution-constrained approaches \citep{peng2019advantage,nair2021awac,kumar2019stabilizing,kostrikov2022offline}. Overall, existing safe offline RL methods often rely on soft constraints or dual optimization and struggle to balance safety and performance when unsafe transitions are sparse.

\paragraph{Safe Generative Policies.}
Recent work has explored generative policy representations for offline RL, including sequence models and diffusion-based policies \citep{chen2021decision,janner2022planning}, with extensions to safety-constrained settings \citep{liu2023constrained,lin2023safe,zheng2024fisor, liu2025ciql}. While effective, diffusion-based policies require simulating stochastic processes over many discretized time steps at inference, which can be computationally expensive and challenging to deploy in control loops. In contrast, flow matching \citep{lipman2023flow, zhang2025EWFM, alles2025flowq} provides a deterministic alternative that directly learns the vector field of the generative process, enabling efficient policy sampling via a single ODE integration. 

%% file: sections/3_background.tex
\begin{figure*}
    \centering
    \includegraphics[width=\textwidth]{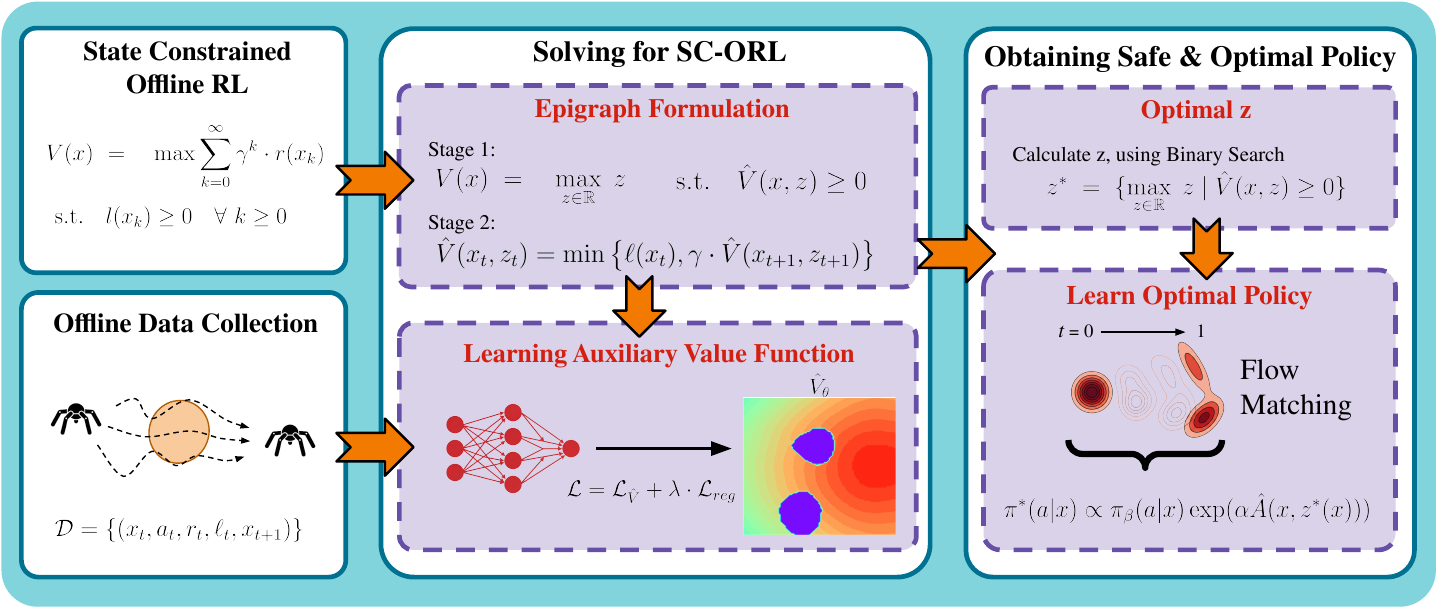}
    \caption{\textbf{EpiFlow Framework.} We propose a framework to solve the SC-ORL problem by learning an Auxiliary Value Function, $\hat{V}(x,z)$, using an offline dataset. We formulate an epigraph form for the problem that learns $\hat{V}$, which if satisfied to stay positive, makes sure that the system stays safe while maximising on the objective. We finally learn a policy which maximises $\hat{V}$ at all time steps to find the most optimal path that the offline data supports.}
    \label{fig:framework}
\end{figure*}

\label{sec:problem-setup}

We study safe offline reinforcement learning in environments with hard state constraints. The environment is modelled as a state-constrained Markov Decision Process (SC-MDP), defined by the tuple
$\mathcal{M} = (\mathcal{X}, \mathcal{A}, f, r, \ell, \gamma)$,
where $\mathcal{X}$ and $\mathcal{A}$ denote the state and action spaces, $x_{t+1} = f(x_t, a_t)$ denotes a unknown, deterministic transition function, $r : \mathcal{X} \rightarrow \mathbb{R}$ is the reward function, $\ell : \mathcal{X} \rightarrow \mathbb{R}$ is an instantaneous state-based safety function, and $\gamma \in (0,1)$ is the discount factor.
We define the \emph{failure set} 
$\mathcal{F} := \{x \in \mathcal{X} \mid \ell(x) < 0\}$,
which represents unsafe states that must be avoided at all times (e.g., collisions or constraint violations). A trajectory is considered safe if it never enters $\mathcal{F}$.
We assume access to an offline dataset
$\mathcal{D} = \{(x_t, a_t, r_t, \ell_t, x_{t+1})\}$,
collected by an unknown behavior policy, with no further interaction with the environment permitted. A policy $\pi(a \mid x)$ induces trajectories
$\tau = (x_0, a_0, x_1, \dots)$
through the transition dynamics.

\subsection{State-Constrained Offline Reinforcement Learning}
Given an initial state $x$, the objective is to compute the maximum achievable discounted return subject to strict state safety at all future time steps. This requirement can be formalized through the following \emph{state-constrained value function}:
\begin{equation}
\label{eq:scorl}
\begin{aligned}
    V(x)
\;=\;&
\sup_{\pi}
\big[ \sum_{k=0}^{\infty} \gamma^k r(x_k) \,\big|\, x_0 = x \big]
\;\;\\ & \text{s.t.} \;\;
x_t \notin \mathcal{F}, \;\forall t \ge 0.
\end{aligned}
\end{equation}

Unlike formulations based on expected cumulative penalties, \eqref{eq:scorl} encodes a \emph{hard safety requirement}: only policies that admit trajectories remaining entirely outside the failure set are considered feasible. This formulation directly captures safety-critical requirements where even a single violation is unacceptable.

However, directly optimizing \eqref{eq:scorl} in the offline setting is challenging. Feasibility depends on long-horizon future behavior, which cannot be verified using local penalties or one-step constraints. Moreover, offline datasets often underrepresent unsafe transitions, making it difficult to infer whether a given state admits any safe continuation. As a result, standard offline RL objectives, which optimize expected return, are ill-suited for enforcing state-wise safety constraints.

\subsection{Epigraph Reformulation}
\label{subsec:epigraph}

The state-constrained value function in \eqref{eq:scorl} jointly optimizes performance and safety, which makes it difficult to reason about directly in the offline setting. To expose the structure of this problem, we adopt an epigraph-based reformulation that expresses the original objective as a two-stage optimization.

In the first stage, the state-constrained value function is characterized as the largest performance threshold that is feasible under the safety constraint:
\begin{equation}
\label{eq:epigraph-stage1}
V(x)
=
\sup \{ z \in \mathbb{R} \mid \hat{V}(x,z) \ge 0 \}.
\end{equation}
This formulation makes explicit that $V(x)$ represents the maximum achievable return level that is compatible with safety starting from state $x$.

The second stage defines the auxiliary value function $\hat{V}(x,z)$, which evaluates whether a given performance threshold $z$ is feasible. Specifically, $\hat{V}(x,z)$ is defined as
\begin{equation}
\label{eq:epigraph-stage2}
\hat{V}(x_t, z_t)
=
\sup_{\pi}
\min \Big\{
\sum_{k=t}^{\infty} \gamma^{k-t} r(x_k) - z_t,
\;
\min_{s \ge t} \gamma^{s-t} \ell(x_s)
\Big\}.
\end{equation}

A formal derivation of the epigraph reformulation is provided in Appendix~\ref{appendix: aux_vfunc_proof}. Discounting is applied to the safety term solely to enable a Bellman-style discrete-time recursion (Eq.~\eqref{eq:V_hat_recursion_final}); since $\gamma\in(0,1)$ preserves the sign of $\ell(x)$, this does not alter the hard state constraint and serves only as a technical device for dynamic programming (see Appendix~\ref{appendix: aux_vfunc_proof}).

The auxiliary value $\hat{V}(x,z)$ is non-negative if and only if there exists a policy that both achieves discounted return at least $z$ and remains outside the failure set at all future time steps. As a result, \eqref{eq:epigraph-stage1} selects the largest performance threshold for which the safety constraint is feasible.

The scalar variable $z$ therefore plays the role of a performance envelope. Values of $z$ smaller than the optimal threshold correspond to conservative but safe behavior, while values larger than this threshold are infeasible and necessarily lead to safety violations.

For discounted infinite-horizon problems, the auxiliary variable admits the discrete-time update
\begin{equation}
\label{eq:z-dynamics}
z_{t+1} = \frac{z_t - r(x_t)}{\gamma},
\end{equation}
which ensures that the augmented state $(x_t, z_t)$ evolves as a Markov process. This interpretation enables learning the feasibility structure of $V(x)$ from offline data using value-based methods, without directly enforcing hard constraints during policy optimization.

%% file: sections/4_methodology.tex
In Section~\ref{sec:problem-setup} we showed that the state-constrained value function $V(x)$ may be characterized via the auxiliary epigraph value function $\hat{V}(x,z)$ through the two-stage formulation (Equations~\eqref{eq:epigraph-stage1},\eqref{eq:epigraph-stage2}). This section describes a practical, data-driven procedure to learn a dataset-consistent approximation of $\hat{V}$ from offline transitions, and to obtain the state-wise performance envelope $z^\star(x)$ used for policy extraction. We first state the recursive form used for learning, then highlight numerical and statistical challenges, and finally present the losses and regularizer we use in practice; implementation-level details are summarized in Algorithm~\ref{alg:learn_epiflow} and illustrated in Fig.~\ref{fig:framework}.
\vspace{-0.8em}

\subsection{Recursive Characterization} 
We begin by characterizing the structure of the auxiliary epigraph value function through a Bellman-style recursion.
\begin{theorem}[Epigraph Value Recursion]\label{thm: recursion}
    Consider a deterministic SC-MDP with bounded reward and discount factor $\gamma \in (0,1)$. Given a transition $(x_t,a_t,r_t,\ell_t,x_{t+1})$ and the epigraph update $z_{t+1}=(z_t-r(x_t))/\gamma$, the epigraph value function $\hat{V}(x_t, z_t)$ satisfies the recursion
    \begin{equation}
    \label{eq:V_hat_recursion_final}
    \hat{V}(x_t, z_t) \;=\; \min\big\{ \ell(x_t),\; \gamma \hat{V}(x_{t+1}, z_{t+1}) \big\}.
    \end{equation}
\end{theorem}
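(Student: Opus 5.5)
The plan is to manipulate the objective inside the supremum in \eqref{eq:epigraph-stage2} algebraically, and then apply a dynamic-programming (principle of optimality) argument to split the supremum over policies into a first action and a tail policy. Fix a trajectory $x_t, x_{t+1}, \dots$ generated by some policy. Write $R_t := \sum_{k\ge t}\gamma^{k-t} r(x_k)$ and $L_t := \min_{s\ge t}\gamma^{s-t}\ell(x_s)$. Boundedness of $r$ and $\gamma\in(0,1)$ make $R_t$ finite. Boundedness of $\ell$, which I would also assume or note is needed, makes $L_t$ well defined as an infimum.

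\textbf{Step 1 (one-step unrolling of the return).} We have $R_t = r(x_t) + \gamma R_{t+1}$. Combined with the epigraph update \eqref{eq:z-dynamics}, this gives
\begin{equation*}
R_t - z_t = \gamma R_{t+1} + r(x_t) - z_t = \gamma\big(R_{t+1} - z_{t+1}\big).
\end{equation*}

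\textbf{Step 2 (one-step unrolling of the safety term).} Separating the $s=t$ term gives $L_t = \min\{\ell(x_t),\, \gamma L_{t+1}\}$.

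\textbf{Step 3 (factoring out $\gamma$).} Combining Steps 1 and 2, and using associativity of $\min$ together with $\min\{\gamma a,\gamma b\}=\gamma\min\{a,b\}$ for $\gamma>0$, the objective becomes
\begin{equation*}
\min\{R_t - z_t,\, L_t\} = \min\big\{\ell(x_t),\ \gamma \min\{R_{t+1}-z_{t+1},\, L_{t+1}\}\big\}.
\end{equation*}
This identity holds pathwise, for every trajectory.

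\textbf{Step 4 (supremum over policies).} Since the dynamics are deterministic and $x_t$ is given, the term $\ell(x_t)$ does not depend on the policy. The map $y\mapsto \min\{\ell(x_t),\gamma y\}$ is continuous and nondecreasing, so it commutes with suprema: $\sup_\pi \min\{c,\gamma Y_\pi\} = \min\{c,\gamma \sup_\pi Y_\pi\}$. A policy from $(x_t,z_t)$ is equivalent to a first action $a_t$ followed by an arbitrary tail policy from $(x_{t+1},z_{t+1})$, where $x_{t+1}=f(x_t,a_t)$. This uses that the augmented state $(x,z)$ is Markov by \eqref{eq:z-dynamics}, so tail policies can be concatenated freely.

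\textbf{Step 5 (the recursion).} Taking the supremum over tail policies first turns the inner expression into $\hat{V}(x_{t+1},z_{t+1})$ by definition \eqref{eq:epigraph-stage2}. This yields
\begin{equation*}
\hat{V}(x_t,z_t) = \min\big\{\ell(x_t),\ \gamma \sup_{a_t}\hat{V}(f(x_t,a_t),z_{t+1})\big\}.
\end{equation*}
This is \eqref{eq:V_hat_recursion_final}, with $x_{t+1}$ understood as the successor under an optimal (or $\epsilon$-optimal) action. For an arbitrary dataset transition, the same argument gives the inequality $\hat{V}(x_t,z_t)\ge \min\{\ell(x_t),\gamma\hat{V}(x_{t+1},z_{t+1})\}$, with equality when $a_t$ is maximizing. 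I would state this interpretation explicitly.

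\textbf{Main obstacle.} The delicate step is Step 4. Interchanging the supremum over policies with the decomposition into first action and tail requires care on two points. First, the supremum over tail policies may not be attained, so I would argue with $\epsilon$-optimal tail policies and let $\epsilon\to0$ to obtain both inequalities. Second, the $z$-dependence must be handled correctly: the tail policy must be allowed to depend on $z_{t+1}$, which is why policies are taken on the augmented state. The algebra in Steps 1 to 3 is routine once the one-step unrolling is written down.
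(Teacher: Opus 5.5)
Your proposal is correct and follows the same route as the paper's proof. Both unroll the return by one step using $z_{t+1}=(z_t-r(x_t))/\gamma$, split off the $s=t$ term of the discounted safety minimum, and pull the policy-independent $\ell(x_t)$ and the factor $\gamma>0$ outside the supremum.

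Your explicit first-action/tail decomposition is more careful than the paper's argument. The paper identifies the remaining $\sup_\pi$ directly with $\hat{V}(x_{t+1},z_{t+1})$, even though $\pi$ still chooses $a_t$ and hence $x_{t+1}$. Your caveat is therefore right: for an arbitrary dataset transition only $\hat{V}(x_t,z_t)\ge\min\{\ell(x_t),\gamma\hat{V}(x_{t+1},z_{t+1})\}$ holds, with equality when $a_t$ is maximizing. That $\sup_{a_t}$ form is also what the algorithm actually implements through $\hat{Q}$ and expectile regression, so it is the correct reading of the statement.
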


A formal proof is provided in Appendix~\ref{appendix: aux_vfunc_rec_proof}, and convergence of the recursion can be established using arguments similar to those developed for safety value functions in Hamilton–Jacobi reachability \citep{Fisac2019HJSafety}.

Although the recursion in \eqref{eq:V_hat_recursion_final} provides a principled characterization of epigraph feasibility, directly learning $\hat{V}$ from offline data is ill-conditioned. In particular, the update does not explicitly enforce sensitivity with respect to the epigraph variable $z$, which causes $\hat{V}(x,z)$ to exhibit weak dependence on $z$ in practice. Empirical evidence in Appendix~\ref{appendix: eff_reg} shows that the learned $\hat{V}$ profiles are often similar across different values of $z$, undermining the interpretation of $z$ as a meaningful performance threshold. Moreover, the recursion implicitly involves a supremum over policies, which must be handled carefully in the offline setting to avoid extrapolation beyond the dataset support.

To mitigate the loss of sensitivity to $z$, we introduce a conservative upper bound derived directly from the epigraph definition:
\begin{equation} \label{eq:aux_decomp}
\begin{aligned}
    \hat{V}(&x_t, z_t) =\\
    & \sup_{\pi} \min \Big\{
    \sum_{k=t}^{\infty} \gamma^{k-t} r(x_k) - z_t,
    \;
    \min_{s \ge t} \gamma^{s-t} \ell(x_s)
    \Big\} \\
    \leq
    &\min \Big\{
    \sup_{\pi}\sum_{k=t}^{\infty} \gamma^{k-t} r(x_k) - z_t,
    \; \sup_{\pi}\min_{s \ge t} \gamma^{s-t} \ell(x_s)
    \Big\} \\
    = &\min\{ V_r(x_t) - z_t, ~V_s(x_t)\},
\end{aligned}
\end{equation}
where, $V_r = \sup_{\pi}\sum_{k=t}^{\infty} \gamma^{k-t} r(x_k)$ and $V_s = \sup_{\pi}
    \min_{s \ge t} \gamma^{s-t} \ell(x_s)$.
This inequality constrains $\hat{V}$ to remain within
a feasible range consistent with achievable performance and safety. This upper bound constrains the scale of $\hat{V}$ and mitigates instability arising from weak conditioning on the epigraph variable $z$. However, it does not address the statistical challenge posed by the implicit supremum over policies in the epigraph recursion: naïvely replacing this maximization with regression under the behavior policy yields values that reflect behavior-induced feasibility rather than the highest feasible values supported by the dataset.

To handle this issue, we adopt expectile regression \citep{kostrikov2022offline} as a dataset-consistent surrogate for the implicit policy maximization. Expectile regression biases value estimation toward the upper envelope of feasible values supported by the data, without querying out-of-distribution actions, thereby correcting the pessimism induced by behavior-conditioned regression while remaining compatible with offline learning.

\paragraph{Avoiding OOD actions.} The epigraph-stage recursion involves a maximization over admissible actions (refer equation \eqref{eq:V_hat_recursion_final}) by taking supremum over $\pi$. In the offline setting, only behaviour-induced actions are observed, explicit maximization can therefore produce optimistic values corresponding to actions not grounded in the dataset. Expectile regression \citep{kostrikov2022offline} provides a principled surrogate for this maximization: we first fit an action-conditional auxiliary Q-function $\hat{Q}(x,z,a)$ on dataset transitions and then distill an auxiliary state-value $\hat{V}(x,z)$ by minimizing an asymmetric squared loss (expectile) that biases the estimate toward the upper envelope of dataset-supported $\hat{Q}$ values without ever querying unsupported actions. Concretely, the expectile loss $\mathcal{L}^\tau(u)=|\tau-\mathbf{1}(u<0)|u^2$ with $\tau\in(0.5,1)$ places greater penalty on underestimation than overestimation. This produces a value $\hat{V}$ that (i) captures high-but-supported values induced by the behaviour policy, and (ii) avoids OOD extrapolation that would otherwise inflate the inferred threshold $z^\star(x)$. Thus expectile regression is a dataset-grounded surrogate for the implicit action maximization in the epigraph definition, and it is complementary to the decomposition regularizer in \eqref{eq:aux_decomp}.

\begin{algorithm}[t]
\caption{Learning the Auxiliary Epigraph Value Function (Epi-Flow)}
\label{alg:learn_epiflow}
\begin{algorithmic}[1]
\REQUIRE Offline dataset $\mathcal{D}=\{(x,a,r,\ell,x')\}$,
discount factor $\gamma$, expectile parameter $\tau$,
regularization weight $\lambda_2$
\STATE Initialize networks $\hat{Q}_\theta$, $\hat{V}_\theta$, $Q_r, V_r, Q_s, V_s$
\WHILE{not converged}
    \STATE Sample minibatch $(x,a,r,\ell,x') \sim \mathcal{D}$
    \STATE Sample $z \sim [z_{\min}, z_{\max}]$ and compute $z' = (z - r(x))/\gamma$
    
    \STATE \textbf{Auxiliary Q-update:}
    \STATE $y_Q \leftarrow \min\{\ell(x), \gamma \hat{V}_{\theta}(x',z')\}$
    \STATE Update $\hat{Q}_\theta$ by minimizing $\mathcal{L}_{\hat{Q}}$
    
    
    \STATE \textbf{Reward value updates:}
    \STATE Update $Q_r, V_r$ using $\mathcal{L}_{Q_r}, \mathcal{L}_{V_r}$
    
    \STATE \textbf{Safety value updates:}
    \STATE Update $Q_s, V_s$ using $\mathcal{L}_{Q_s}, \mathcal{L}_{V_s}$
    
    \STATE \textbf{Expectile value distillation \& regularization:}
    \STATE Update $\hat{V}_\theta$ using $\mathcal{L}_{\hat{V}} + \lambda \cdot\mathcal{L}_{reg}$
    
\ENDWHILE
\STATE \textbf{return} Learned auxiliary value function $\hat{V}_\theta$
\end{algorithmic}
\end{algorithm}

\subsection{Practical Loss Functions} \label{subsec: practical_loss}
Learning proceeds by alternating
fitting an action-value model, distilling expectile values, estimating the
separate reward and safety envelopes, and enforcing the decomposition
constraint. All expectations are empirical averages over minibatches drawn
from $\mathcal{D}$ together with $z$ values sampled from a dataset-supported
interval $[z_{\min},z_{\max}]$. The auxiliary Q-target follows from
\eqref{eq:V_hat_recursion_final},
\begin{equation}
\label{eq:Qhat_target}
y_Q(x,z,a,x',z') \;=\; \min\{ \ell(x),\; \gamma \hat{V}_{\theta}(x',z') \},
\end{equation}
and the corresponding squared loss is
\begin{equation}
\label{eq:L_Qhat}
\mathcal{L}_{\hat{Q}}(\theta)=\mathbb{E}\big[ (y_Q(x,z,a,x',z') - \hat{Q}_\theta(x,z,a))^2 \big],
\end{equation}
Expectile distillation to the state-value is performed by
\begin{equation}
\label{eq:L_Vhat}
\mathcal{L}_{\hat{V}}(\theta)=\mathbb{E}\big[ \mathcal{L}^\tau( \hat{Q}_\psi(x,z,a) - \hat{V}_\theta(x,z) )\big],
\end{equation}
with $\tau$ chosen to bias toward the upper data-supported envelope. The
reward and safety envelopes are estimated via analogous Q/V losses:
\begin{equation}
\label{eq:Qr}
\mathcal{L}_{Q_r}(\phi)=\mathbb{E}\big[( r(x) + \gamma V_r^\phi(x') - Q_r^\phi(x,a) )^2\big],
\end{equation}
\begin{equation}
\label{eq:Vr_expectile}
\mathcal{L}_{V_r}(\phi)=\mathbb{E}\big[\mathcal{L}^\tau( Q_r^\phi(x,a) - V_r^\phi(x) )\big],
\end{equation}
\begin{equation}
\label{eq:Qs}
\mathcal{L}_{Q_s}(\psi)=\mathbb{E}\big[( \min\{\ell(x),\gamma V_s^\psi(x')\} - Q_s^\psi(x,a) )^2\big],
\end{equation}
\begin{equation}
\label{eq:Vs_expectile}
\mathcal{L}_{V_s}(\psi)=\mathbb{E}\big[\mathcal{L}^\tau( Q_s^\psi(x,a) - V_s^\psi(x) )\big].
\end{equation}
The decomposition regularizer is implemented as the one-sided penalty
\begin{equation}
\label{eq:L2_regulariser}
\begin{aligned}
\mathcal{L}_{reg}(\theta)
\;=\;
\mathbb{E}\Big[
\max\Big(
0,\;
\hat{V}_\theta(x,z) \\
-
\min\big\{
V_r^\phi(x) - z,\;
V_s^\psi(x)
\big\}
\Big)
\Big],
\end{aligned}
\end{equation}

reduction of this loss would ensure satisfaction of Eq. \eqref{eq:aux_decomp}
and the full objective combines the terms:
\begin{equation}
\label{eq:total-loss}
\begin{aligned}
\mathcal{L}
\;=\;&
\mathcal{L}_{\hat{V}}(\theta)
+
\lambda \cdot \mathcal{L}_{reg}(\theta)
\end{aligned}
\end{equation}

where the $\lambda$ control the trade-offs between regression, expectile distillation, envelope fitting, and the decomposition regularizer.

The outcome of this learning procedure is a dataset-consistent auxiliary value
function $\hat{V}(x,z)$ together with reward and safety envelopes $V_r(x)$ and
$V_s(x)$. 
In the next section, we show how these functions will be leveraged to synthesize executable policies in continuous action spaces.
The complete learning procedure is
summarized in Algorithm~\ref{alg:learn_epiflow}, and its interaction between the
different components is illustrated in Fig.~\ref{fig:framework}.

%% file: sections/5_policy_syn.tex
Given the learned auxiliary value function $\hat{V}_\theta(x,z)$, we now describe
how to synthesize a policy that improves performance while respecting the
feasibility constraints encoded by the epigraph formulation. For each state
$x$, feasibility induces a maximal admissible performance threshold
\begin{equation} \label{eq: z_opt_search}
z^\star(x) = \sup \{ z \in \mathbb{R} \mid \hat{V}_\theta(x,z) \ge 0 \},
\end{equation}
which is obtained via binary search over $z$. Although strict monotonicity is not enforced architecturally, we find that the learned $\hat{V}_\theta(x,z)$ exhibits sufficient empirical monotonicity for stable bisection when combined with interval clamping.

Policy improvement is formulated as a state-wise optimization that favors actions yielding high feasible performance while remaining close to the behavior distribution underlying the offline dataset. Let $\pi_\beta(a\mid x)$ denote the behavior policy. We consider the regularized objective:
\begin{equation}
\label{eq:policy-obj}
\pi^\star(\cdot\mid x)
=
\arg\max_{\pi(\cdot\mid x)}
\left\{
\begin{aligned}
&\mathbb{E}_{a\sim\pi(\cdot\mid x)}
\big[\hat{A}(x,a;z^\star(x))\big] \\
&\quad
-
\frac{1}{\alpha}
\mathrm{KL}\big(\pi(\cdot\mid x)\,\|\,\pi_\beta(\cdot\mid x)\big)
\end{aligned}
\right\}.
\end{equation}

where $\alpha>0$ is a temperature parameter and
$\hat{A}(x,a;z)=\hat{Q}_\theta(x,z,a)-\hat{V}_\theta(x,z)$ denotes the auxiliary
advantage evaluated at optimal $z^\star(x)$. This objective
leads to the following characterization.

\begin{theorem}[Epigraph-Guided Optimal Policy]
\label{thm:epigraph-policy}
For any state $x$ such that $\pi_\beta(a\mid x)>0$ on the action support, the
solution of \eqref{eq:policy-obj} is given by the exponential tilting
\begin{equation}
\label{eq:exp-tilt}
\pi^\star(a\mid x)
\propto
\pi_\beta(a\mid x)\exp\big(\alpha \hat{A}(x,a;z^\star(x))\big)
\end{equation}
\end{theorem}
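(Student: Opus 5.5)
The plan is to treat \eqref{eq:policy-obj} as a separate, strictly concave optimization for each fixed state $x$ over probability densities on $\mathcal{A}$. Once $x$ is fixed, $z^\star(x)$ from \eqref{eq: z_opt_search} is a fixed number, so $g(a) := \hat{A}(x,a;z^\star(x))$ is just a fixed bounded measurable function of $a$; no property of the epigraph recursion (Theorem~\ref{thm: recursion}) is needed. I will assume $\hat{Q}_\theta$ and $\hat{V}_\theta$ are bounded, which holds for bounded rewards and network outputs, so that the normalizer
\[
Z(x) = \int \pi_\beta(a\mid x)\, e^{\alpha g(a)}\,da
\]
is finite and positive. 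The candidate solution is $\pi^\star(a\mid x) = \pi_\beta(a\mid x)e^{\alpha g(a)}/Z(x)$, which is well defined because $\pi_\beta(\cdot\mid x) > 0$ on the action support.

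I will prove optimality with the Gibbs variational identity rather than Lagrange multipliers, since this avoids calculus of variations on densities. Take any $\pi(\cdot\mid x)$ with $\mathrm{KL}(\pi\,\|\,\pi_\beta) < \infty$; if the KL is infinite, the objective is $-\infty$ and the inequality is trivial. Write
\[
\log\frac{\pi}{\pi_\beta} = \log\frac{\pi}{\pi^\star} + \alpha g - \log Z(x),
\]
which gives
\[
\mathrm{KL}(\pi\,\|\,\pi_\beta) = \mathrm{KL}(\pi\,\|\,\pi^\star) + \alpha\,\mathbb{E}_\pi[g] - \log Z(x).
\]
Multiplying by $1/\alpha$ and substituting into \eqref{eq:policy-obj}, the objective becomes
\[
\mathbb{E}_\pi[g] - \tfrac{1}{\alpha}\mathrm{KL}(\pi\,\|\,\pi_\beta) = \tfrac{1}{\alpha}\log Z(x) - \tfrac{1}{\alpha}\mathrm{KL}(\pi\,\|\,\pi^\star).
\]
The first term does not depend on $\pi$, and $\mathrm{KL}\ge 0$ with equality if and only if $\pi=\pi^\star$ almost everywhere. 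So $\pi^\star$ is the unique maximizer and attains the optimal value $\tfrac{1}{\alpha}\log Z(x)$. This is exactly \eqref{eq:exp-tilt}.

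The main technical care is in the measure-theoretic side conditions. First, any $\pi$ with finite KL must be absolutely continuous with respect to $\pi_\beta$, which is why the positivity hypothesis on $\pi_\beta$ appears. Second, $\mathbb{E}_\pi[g]$ must be finite so that the algebra above is legitimate; boundedness of $g$ ensures this. Third, $Z(x)$ must be finite; if $\hat{A}$ were unbounded above, I would instead assume $Z(x) < \infty$ directly. As a sanity check, I will also sketch the heuristic derivation: add a multiplier $\lambda$ for $\int \pi = 1$ and take the pointwise first-order condition $g(a) - \tfrac{1}{\alpha}\bigl(\log\tfrac{\pi}{\pi_\beta} + 1\bigr) - \lambda = 0$. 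This yields the same exponential form, and the identity argument above makes that derivation rigorous.
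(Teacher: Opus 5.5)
Your proof is correct, but it takes a different route from the paper's.

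The paper forms the Lagrangian with a multiplier for $\int \pi(a\mid x)\,da = 1$, sets the pointwise functional derivative to zero, and exponentiates, absorbing $\exp(-\alpha\lambda-1)$ into the normalizer. This is essentially the heuristic you keep only as a sanity check. That derivation has one merit: it \emph{produces} the exponential-tilting candidate rather than presupposing it. On its own, though, it only exhibits a stationary point:
\begin{itemize}
\item it never argues that this point is the maximizer, which needs the strict concavity of $-\mathrm{KL}(\cdot\,\|\,\pi_\beta)$;
\item it does not address the constraint $\pi\ge 0$ (harmless here, since the solution is positive);
\item it manipulates $\log(\pi/\pi_\beta)$ pointwise, which presumes densities and full support of $\pi_\beta$.
\end{itemize}

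Your identity $\mathbb{E}_\pi[g]-\tfrac{1}{\alpha}\mathrm{KL}(\pi\,\|\,\pi_\beta)=\tfrac{1}{\alpha}\log Z(x)-\tfrac{1}{\alpha}\mathrm{KL}(\pi\,\|\,\pi^\star)$ gives global optimality, uniqueness up to null sets, and the optimal value $\tfrac{1}{\alpha}\log Z(x)$ in one step. Because it works with Radon--Nikodym derivatives, it also covers a $\pi_\beta$ without a Lebesgue density, such as the empirical action distribution $p_\beta$ used in the flow-matching stage. The price is the integrability conditions ($Z(x)<\infty$ and finite $\mathbb{E}_\pi[g]$), which you state explicitly and the paper leaves implicit.

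One minor correction: absolute continuity of finite-KL policies follows from the definition of KL, not from the positivity hypothesis. In fact your argument never uses that hypothesis, since $\pi^\star$ and $\pi_\beta$ are mutually absolutely continuous regardless. It is the paper's pointwise Lagrangian computation that needs it.
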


For the proof of the theorem \ref{thm:epigraph-policy}, please refer Appendix~\ref{appendix:proof_epigraph-policy}.
The policy in \eqref{eq:exp-tilt} concentrates probability mass on actions that
maximize epigraph-consistent advantage while suppressing actions unsupported by
the dataset through the KL regularization. The temperature $\alpha$ controls the
trade-off between improvement and adherence to the offline distribution.

\begin{figure*}
  \centering  
  \begin{subfigure}{\textwidth}
    \centering
    \includegraphics[width=0.97\textwidth]{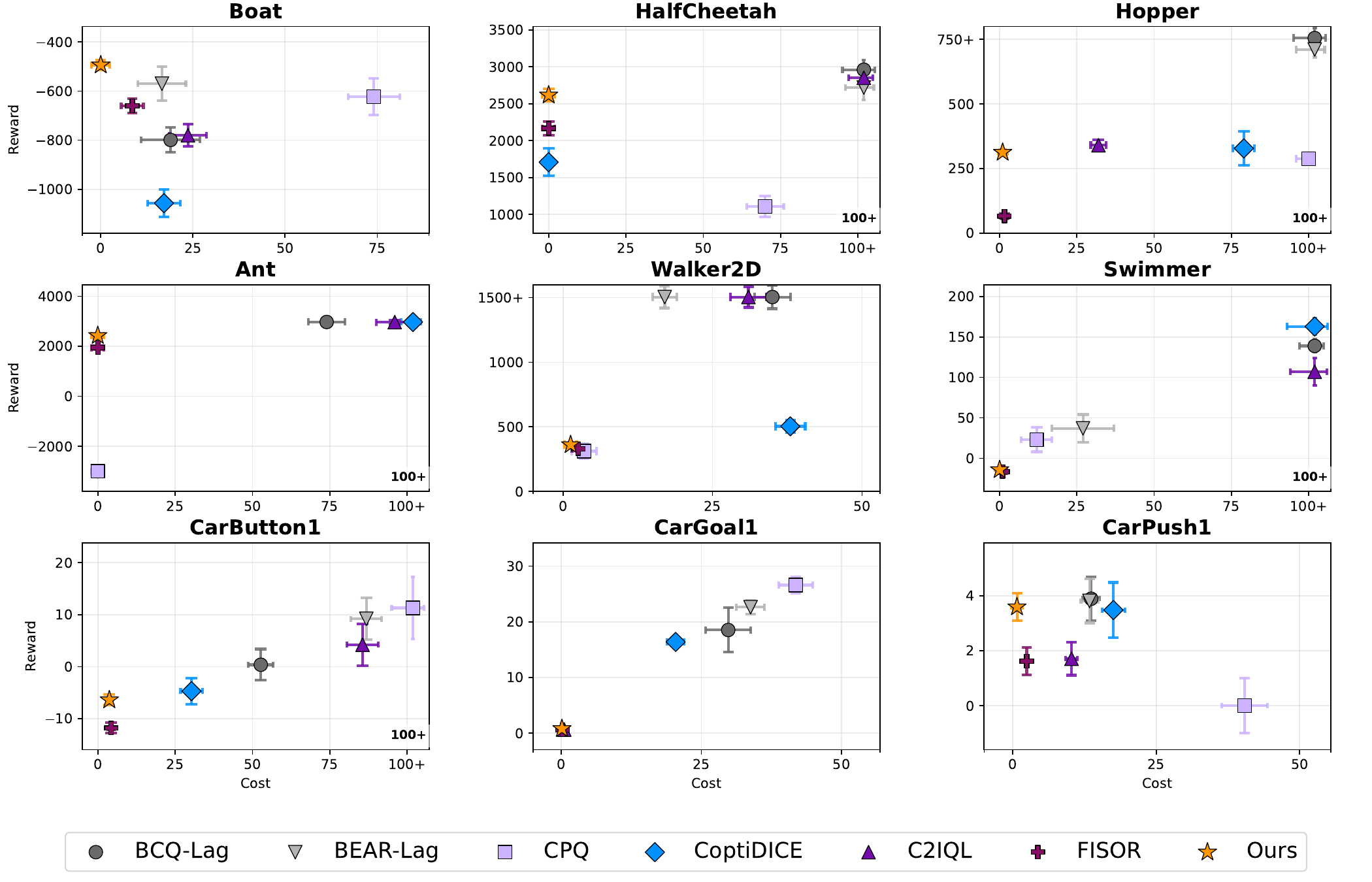}
    \label{fig:plot}
  \end{subfigure}
\vspace{-2em}
  \caption{\textbf{Evaluation Results:} Results plot for all the environments. Points towards \textcolor{teal}{left} ($\leftarrow$) are more safe than those on \textcolor{red}{right} ($\rightarrow$). Whereas those towards \textcolor{teal}{top} ($\uparrow$) have higher rewards than those towards \textcolor{red}{bottom} ($\downarrow$). Evaluated over 500 episodes and 5 seed values.}
  \label{fig:results}
\vspace{-1.0em}
\end{figure*}

To realize the policy in \eqref{eq:exp-tilt} in continuous action spaces, we parameterize $\pi^\star$ implicitly using a conditional flow model trained via Flow Matching. Let $p_\beta(a\mid x)$ denote the empirical action density induced by the dataset. The target policy corresponds to the reweighted density $p_\beta(a\mid x) w(x,a)$, where
\begin{equation}
w(x,a) = \exp\big(\alpha \hat{A}(x,a;z^\star(x))\big).
\end{equation}
Flow Matching learns a time-dependent vector field $v_\zeta(a,x,t)$ whose induced ODE transports samples from a simple base distribution to this weighted target
density.

The flow model is trained by minimizing the weighted Flow Matching objective
\begin{equation}
\label{eq:fm-loss}
\mathcal{L}_{\mathrm{FM}}(\zeta)
=
\mathbb{E}_{(x,a)\sim\mathcal{D},\,t,\epsilon}
\left[
w(x,a)\,\big\|v_\zeta(a_t,x,t)-u_t(a_t\mid a)\big\|_2^2
\right],
\end{equation}
where $a_t$ lies along a predefined probability path from noise to the data action $a$, and $u_t(a_t\mid a)$ denotes the corresponding target vector field. This construction yields the following recovery result.

\begin{theorem}[Weighted Flow Matching Policy Recovery]
\label{thm:fm-recovery}
Let $p_0(a)$ denote a base distribution and let $v_\zeta(a,x,t)$ define a
time-dependent vector field inducing the ODE
\begin{equation}
\label{eq:fm-ode}
\frac{d a_t}{d t} = v_\zeta(a_t, x, t), \qquad a_0 \sim p_0.
\end{equation}
Let $p_t^\zeta(a\mid x)$ denote the conditional density induced by this flow at
time $t$. Assume sufficient model capacity and exact minimization of the
weighted Flow Matching objective \eqref{eq:fm-loss}. Then the terminal density
$p_1^\zeta(a\mid x)$ satisfies
\begin{equation}
\label{eq:fm-target}
p_1^\zeta(a\mid x) \propto p_\beta(a\mid x)\exp\big(\alpha \hat{A}(x,a;z^\star(x))\big),
\end{equation}
which coincides with the epigraph-guided optimal policy
$\pi^\star(a\mid x)$ defined in \eqref{eq:exp-tilt}.
\end{theorem}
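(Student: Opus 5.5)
The plan is to reduce the weighted objective \eqref{eq:fm-loss} to an unweighted conditional Flow Matching objective whose data distribution is the tilted density $q(a\mid x)\propto p_\beta(a\mid x)\,w(x,a)$. The standard marginalization argument of \citet{lipman2023flow} then identifies the minimizer with the marginal vector field that transports $p_0$ to $q$.

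\textbf{Step 1: change of measure.} Fix $x$ and write $Z(x)=\int p_\beta(a\mid x)w(x,a)\,da$, which we assume finite and positive. This holds, for instance, if $\hat{A}$ is bounded on the support of $p_\beta$, which follows from bounded rewards and bounded $\ell$. Then
\begin{equation*}
\mathbb{E}_{a\sim p_\beta(\cdot\mid x)}\big[w(x,a)\,g(a)\big]=Z(x)\,\mathbb{E}_{a\sim q(\cdot\mid x)}\big[g(a)\big]
\end{equation*}
for every integrand $g$. Applying this with $g(a)=\mathbb{E}_{t,\epsilon}\|v_\zeta(a_t,x,t)-u_t(a_t\mid a)\|^2$ shows that, conditionally on $x$, \eqref{eq:fm-loss} is a positive multiple $Z(x)$ of the conditional Flow Matching loss with target $q(\cdot\mid x)$. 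The factor $Z(x)$ varies across states but is positive. With sufficient capacity, $v_\zeta$ can be optimized independently for each $x$, so a minimizer of the aggregate loss minimizes each conditional loss wherever the state marginal has mass.

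\textbf{Step 2: identify the minimizer.} Define the probability path $p_t(a_t\mid x)=\int p_t(a_t\mid a)\,q(a\mid x)\,da$, built from the predefined conditional path with $p_0(\cdot\mid a)=p_0$ and $p_1(\cdot\mid a)\approx\delta_a$. Define the marginal field
\begin{equation*}
u_t(a_t\mid x)=\int u_t(a_t\mid a)\,\frac{p_t(a_t\mid a)\,q(a\mid x)}{p_t(a_t\mid x)}\,da .
\end{equation*}
Expanding the square, the conditional Flow Matching loss differs from $\mathbb{E}_{t,\,a_t\sim p_t}\|v_\zeta(a_t,x,t)-u_t(a_t\mid x)\|^2$ by a term independent of $\zeta$; this is the cross-term computation of \citet{lipman2023flow}. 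Under exact minimization we therefore obtain $v_\zeta=u_t(\cdot\mid x)$ $p_t$-almost everywhere.

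\textbf{Step 3: pushforward.} The marginal field $u_t(\cdot\mid x)$ generates $p_t(\cdot\mid x)$. Each conditional field $u_t(\cdot\mid a)$ satisfies the continuity equation for $p_t(\cdot\mid a)$, and integrating against $q(a\mid x)$ gives the continuity equation for $p_t(\cdot\mid x)$. Under Lipschitz regularity of $v_\zeta$, the continuity equation has a unique solution with initial condition $p_0$. Hence the flow density satisfies $p_t^\zeta(\cdot\mid x)=p_t(\cdot\mid x)$, and at $t=1$ we get $p_1^\zeta(\cdot\mid x)=q(\cdot\mid x)\propto p_\beta(a\mid x)\exp\big(\alpha\hat{A}(x,a;z^\star(x))\big)$. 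Comparing with Theorem~\ref{thm:epigraph-policy}, this is exactly \eqref{eq:exp-tilt}, with $\pi_\beta$ replaced by its empirical counterpart $p_\beta$.

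\textbf{Main obstacle.} I expect Step 3 to be the delicate part. Equality only $p_t$-almost everywhere, together with the need for uniqueness of the continuity equation, requires regularity assumptions: Lipschitz fields and a conditional path whose endpoint approaches $\delta_a$, for example via a small $\sigma_{\min}$. If the terminal Gaussian has width $\sigma_{\min}>0$, the statement holds only up to convolution with that Gaussian. I would absorb these requirements into the theorem's hypotheses of ``sufficient model capacity and exact minimization.'' I would also state explicitly that $p_\beta$ is treated as a density, so that the tilted target is well defined.
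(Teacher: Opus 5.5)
Your proposal is correct and follows essentially the paper's route. The paper writes the weighted-MSE minimizer directly as the $w$-weighted conditional expectation of $u_t(a_t\mid a)$, which is exactly your marginal field for the tilted density $q=p_\beta w/Z(x)$, and then uses the continuity equation and the $t\to 1$ delta limit to conclude $p_1^\zeta\propto p_\beta w$. Your change-of-measure reduction to standard conditional Flow Matching makes rigorous several points the paper leaves informal: finiteness of $Z(x)$; uniqueness of the continuity-equation solution for Lipschitz $v_\zeta$, where the paper instead claims $v^*$ is "the unique field" generating the path; the $\sigma_{\min}$ caveat; and the distinction between $p_\beta$ and $\pi_\beta$.
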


For the proof of the theorem \ref{thm:fm-recovery}, please refer Appendix~\ref{appendix:proof_fm_recovery}.
At inference time, actions are generated by integrating the learned ODE conditioned on $(x,z^\star(x))$. The complete procedure for computing $z^\star(x)$, constructing epigraph-guided weights, and training the flow-based policy is summarized in Algorithm~\ref{alg:epi_flow_policy} and illustrated in Fig.~\ref{fig:framework}.

%% file: sections/6_experiments.tex

To evaluate Epi-Flow, we want to address two core questions: (i) how its safety rate compares to state-of-the-art constrained offline RL baselines; and (ii) how compliance with safety constraints impacts task performance, measured by the cumulative reward $R(x)=\sum_{k=0}^\infty \gamma^k \cdot r(x_k)$. By answering these questions we demonstrate that Epi-Flow enables co-optimization of safety and performance, achieving substantially higher safety rates while maintaining competitive cumulative reward.

\textbf{Baselines:} We compare Epi-Flow against a diverse set of safety constrained offline reinforcement learning methods. For such approaches, we include \textbf{BEAR-Lag} (Lagrangian dual version of \citep{kumar2019stabilizing}), \textbf{BCQ-Lag} (Lagrangian dual version of \citep{fujimoto2019off}), \textbf{COptiDICE} \citep{lee2022coptidice}, \textbf{CPQ} \citep{xu2022constraints}, \textbf{C2IQL} \citep{liu2025ciql} and \textbf{FISOR} \citep{zheng2024fisor}. In contrast to these methods, Epi-Flow learns an \emph{auxiliary value function} based policy from offline demonstrations that accounts for future unsafe interactions in advance and therefore, accordingly taking actions that maximize the cumulative reward.

\textbf{Evaluation Metrics:} We evaluate all methods based on (i) \emph{safety/cost}, measured as the total number of safety violations incurred before episode termination, and (ii) \emph{performance}, measured via the cumulative episode rewards. These metrics allow us to assess the trade-off between strict safety enforcement and task performance across different offline RL approaches.

\subsection{Experimental Case Studies}
For experiments, we test and evaluate our framework against the baselines on a varied range of environments. Below we list all these environments that we use:
\begin{itemize}
    \item \textbf{Safe Boat Navigation:} In our first experiment, we examine a $2$-dimensional collision avoidance problem involving a boat governed by Point mass dynamics in a river with drift velocity that varies with the y-coordinate of the boat. The objective is to ensure safety by avoiding the static obstacles while navigating through the river with varying drift. To evaluate our method against baselines, we randomly sampled 500 initial states. Further details on the system dynamics, state space bounds, and experimental setup are provided in Appendix~\ref{appendix: Boat2D}.
    
    \item \textbf{Safety Gymnasium:} We also evaluate our framework on Safety Gymnasium environments \citep{ji2023safety} which include Safe Velocity and Safe Navigation Gymnasium environments. Specifically, we evaluate Epi-Flow on high-dimensional MuJoCo tasks like Hopper, Half Cheetah, Swimmer, Walker2D and Ant for Safe Velocity task, and Car Navigation task with 3 navigation environment types: Goal, Push, Button. For the safe velocity task, the objective in each environment is to maximize reward while keeping the agent velocity below the velocity thresholds. For Safe Car Navigation, the objective in each environment type is to avoid the obstacles while maximizing the reward. For evaluation of our method against baselines, we have randomly sampled 500 initial states for each environment.

\end{itemize}

For all the standard Safety Gymnasium, we retain the original reward and safety-violation metrics from Safety-Gymnasium \citep{ji2023safety} and use the standard DSRL dataset for safe offline RL \citep{liu2024dsrl}. To evaluate our method against baselines for results, we fixed 500 randomly sampled safe initial states for each environment.

\begin{figure}
  \centering
  \includegraphics[width=0.85\linewidth]{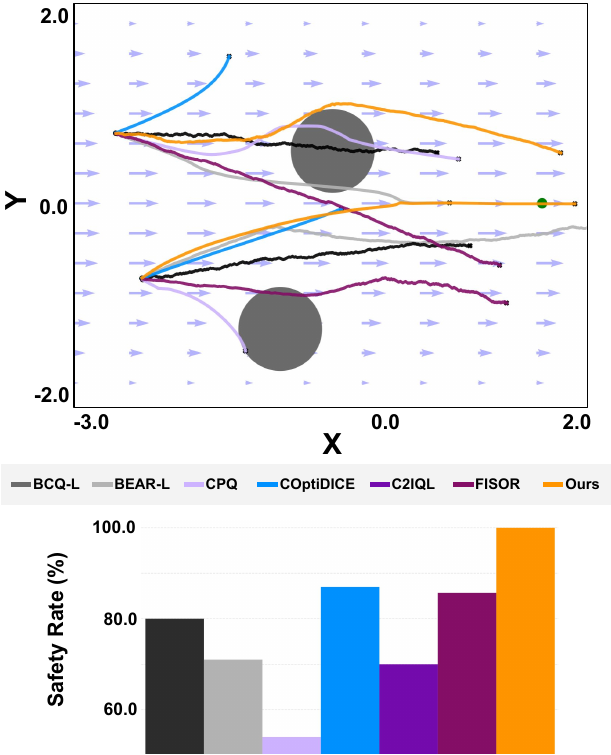}
  \caption{\textbf{Boat Environment Analysis.} \emph{(Top):} Trajectories for all methods (baselines and ours) rollout from 2 distinct initial states are shown, where the 2 circles are the \textcolor{darkgray}{obstacles} with which the agent has to avoid any collision while trying to reach the \textcolor{teal}{goal}.}
  \label{fig:boat-results}
  \vspace{-1.3em}
\end{figure}

\subsection{Results}

We evaluate Epi-Flow against a comprehensive set of offline RL baselines across all environments described above. Figure~\ref{fig:results} summarizes the results, illustrating how different methods navigate the trade-off between cumulative reward and safety constraint satisfaction. Existing safe offline RL approaches vary widely in their ability to balance these objectives. Lagrangian-based methods such as BEAR-Lag and BCQ-Lag frequently favor reward maximization, resulting in elevated safety violations due to weak constraint enforcement. Other approaches, including COptiDICE, CPQ, and C2IQL, exhibit inconsistent behavior across tasks, alternating between overly conservative policies that sacrifice performance and policies that fail to maintain reliable safety rates. FISOR achieves strong safety performance across environments, but typically does so by adopting conservative policies that limit achievable reward.

In contrast, Epi-Flow consistently attains high safety rates while maintaining strong performance across all evaluated domains, ranging from low-dimensional Boat Navigation to high-dimensional MuJoCo Safe Velocity and Safe Car Navigation tasks. Unlike prior methods that trade reward for safety or vice versa, Epi-Flow reliably operates in the desirable regime where both objectives are jointly optimized. By leveraging the epigraph-guided auxiliary value function, our approach anticipates future safety violations and allocates performance budgets accordingly, avoiding both excessive conservatism and unsafe reward-seeking. These results demonstrate that Epi-Flow offers a more stable and effective solution for safety-critical offline reinforcement learning than existing constraint-based or filtering-based baselines.
\vspace{-0.5em}
\subsection{Ablation Studies}

To better understand the design choices and robustness of Epi-Flow, we conduct a series of ablation studies that isolate the effect of key modeling and inference components. Specifically, we examine: (i) the robustness of the learned policy to action perturbations, which probes the stability of epigraph-guided synthesis under execution noise; (ii) sensitivity to the expectile parameter $\tau$, which governs how aggressively the auxiliary value function prioritizes high, dataset-supported feasibility levels; (iii) the effect of the number of action samples used at inference time, highlighting the trade-offs between computation cost, performance, and safety; and (iv) the role of the decomposition regularizer and its weight $\lambda$ in shaping the learned auxiliary value function and downstream policy behavior. Together, these ablations provide insight into the stability, hyperparameter sensitivity, and practical deployment characteristics of Epi-Flow. Detailed quantitative results and visualizations are reported in Appendix~\ref{appendix: ablation_study}.

%% file: sections/7_conclusion.tex

In this work, we proposed an offline reinforcement learning framework that co-optimizes safety and performance by leveraging an epigraph-based reformulation to convert hard state constraints into a tractable value-based synthesis problem. The method learns an auxiliary feasibility-aware value function via expectile regression and synthesizes executable policies in continuous action spaces using epigraph-guided weighted Flow Matching, while remaining within the support of the offline data and avoiding online exploration. Empirically, the approach achieves consistently low observed safety violations across a range of high-dimensional control tasks while maintaining competitive performance. While several environments exhibit rare or no violations, others show small but non-zero costs, indicating that the learned value functions act as empirical, in-distribution safety certificates rather than formal worst-case guarantees. Future work includes integrating formal verification methods, such as conformal prediction or Lipschitz-based certification, and extending the framework to stochastic and adversarial settings using uncertainty-aware and robust optimization techniques to handle distributional shift and disturbances.

%% file: sections/Appendix.tex
\section{Theoretical Insights}

\subsection{Derivation of the Epigraph Reformulation}
\label{appendix: aux_vfunc_proof}

In this section, we formally derive the auxiliary epigraph value function and prove its equivalence to the original state-constrained objective. We show that maximizing the constrained value is equivalent to finding the maximal level-set of an unconstrained auxiliary value function defined over an augmented state space.

\subsubsection{Problem Formulation}
Consider a deterministic Markov Decision Process (MDP) with state dynamics $x_{k+1} = f(x_k, a_k)$ and a reward function $r(x_k)$. We enforce a hard state constraint $\ell(x_k) \ge 0$ for all time steps $k \ge t$.

Let $\Pi$ denote the set of all admissible policies. The state-constrained value function $V(x_t)$ is defined as the supremum of the cumulative discounted return subject to safety constraints:
\begin{equation}
    \label{eq:original_primal}
    V(x_t) := \sup_{\pi \in \Pi} \left\{ \sum_{k=t}^{\infty} \gamma^{k-t} r(x_k) \;\middle|\; \ell(x_k) \ge 0, \forall k \ge t \right\}.
\end{equation}

\subsubsection{Epigraph Value Function}
We introduce an auxiliary scalar decision variable $z_t \in \mathbb{R}$, which represents a candidate performance threshold \citep{boyd2004convex}. We define the \textit{auxiliary epigraph value function} $\hat{V}(x_t, z_t)$ as the optimal value of a maximin objective that jointly considers the return margin and the safety margin:
\begin{equation}
    \label{eq:epigraph_def}
    \hat{V}(x_t, z_t) := \sup_{\pi \in \Pi} \min \left\{ \sum_{k=t}^{\infty} \gamma^{k-t} r(x_k) - z_t, \quad \inf_{s \ge t} \gamma^{s-t} \ell(x_s) \right\}.
\end{equation}

\begin{proposition}[Equivalence of Epigraph Formulation]
    The original constrained value function $V(x_t)$ can be recovered from the zero-level set of the auxiliary value function $\hat{V}(x_t, z_t)$. Specifically:
    \begin{equation}
        V(x_t) = \sup \{ z_t \in \mathbb{R} \mid \hat{V}(x_t, z_t) \ge 0 \}.
    \end{equation}
\end{proposition}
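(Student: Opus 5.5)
The plan is to prove the two inequalities $V(x_t)\le \sup\mathcal Z$ and $\sup\mathcal Z\le V(x_t)$ separately, where $\mathcal Z:=\{z\in\mathbb R\mid \hat V(x_t,z)\ge 0\}$. Throughout, for a fixed policy $\pi$ I write $J^\pi(x_t):=\sum_{k\ge t}\gamma^{k-t}r(x_k)$ and $S^\pi(x_t):=\inf_{s\ge t}\gamma^{s-t}\ell(x_s)$ along the (deterministic) trajectory it induces. Bounded rewards make $J^\pi$ finite.

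\textbf{Step 1 (constraint in terms of $S^\pi$).} Since $\gamma^{s-t}>0$, we have $\ell(x_s)\ge 0$ for all $s\ge t$ if and only if $S^\pi(x_t)\ge 0$. Indeed, an infimum of nonnegative numbers is nonnegative, and a single index with $\ell(x_s)<0$ forces $S^\pi<0$. This is the precise sense in which discounting the safety term preserves the hard constraint. I would also record two structural facts, both immediate from \eqref{eq:epigraph_def}: $z\mapsto\hat V(x_t,z)$ is nonincreasing, and it is $1$-Lipschitz. Hence $\mathcal Z$ is an interval unbounded below.

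\textbf{Step 2 ($V\le\sup\mathcal Z$).} Take any feasible $\pi$, so $S^\pi\ge 0$, and set $z=J^\pi(x_t)$. Then $\min\{J^\pi-z,\,S^\pi\}=\min\{0,S^\pi\}=0$, so $\hat V(x_t,J^\pi)\ge 0$ and $J^\pi\in\mathcal Z$. Taking the supremum over feasible $\pi$ gives $V(x_t)\le\sup\mathcal Z$. If no feasible policy exists, $V=-\infty$ and the inequality is trivial.

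\textbf{Step 3 ($\sup\mathcal Z\le V$), the main obstacle.} Take $z$ with $\hat V(x_t,z)\ge 0$.

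If $\hat V(x_t,z)>0$, there is a policy with $\min\{J^\pi-z,S^\pi\}>0$. This policy is feasible by Step 1, and $J^\pi>z$, so $z<V(x_t)$.

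The difficulty is the boundary case $\hat V(x_t,z)=0$. There the supremum only yields, for each $\varepsilon>0$, a policy with $J^\pi\ge z-\varepsilon$ and $S^\pi\ge-\varepsilon$. Such a policy need not be safe. Because of the safety term, the Lipschitz shift in $z$ from Step 1 cannot be used to push $\hat V$ strictly above zero.

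I would resolve this first by invoking attainment of the supremum in \eqref{eq:epigraph_def}, which is the standing assumption in \citet{altarovici2013general} (compact action set, continuous $f,r,\ell$, so trajectories depend continuously on open-loop controls). Under attainment there is an optimal $\pi$ with $J^\pi\ge z$ and $S^\pi\ge 0$. That $\pi$ is feasible, giving $z\le V(x_t)$.

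If attainment is unavailable, the fallback is to state the result with $V$ understood as the value over policies that are feasible up to arbitrarily small violation, i.e.\ the lower-semicontinuous relaxation. The $\varepsilon$-argument above proves the statement exactly in that form.

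Combining Steps 2 and 3 gives $V(x_t)=\sup\mathcal Z$.
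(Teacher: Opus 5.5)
Your proof is correct under the attainment hypothesis you add, and its core coincides with the paper's. Both rewrite feasibility as $\min\{J^\pi-z,S^\pi\}\ge 0$ and use $\gamma^{s-t}>0$ to pass between discounted and undiscounted safety margins. The paper, however, argues by a chain of set identities,
\[
V(x_t)=\sup\{z\mid \exists\pi:\ \min\{J^\pi-z,S^\pi\}\ge 0\}=\sup\{z\mid \sup_{\pi}\min\{J^\pi-z,S^\pi\}\ge 0\},
\]
in which only the inclusion of the first set in the second is automatic. The reverse inclusion is exactly your boundary case $\hat{V}(x_t,z)=0$, and the paper uses it without justification. Its replacement of the undiscounted margin by the discounted one \emph{inside} the supremum has the same issue, since the sign equivalence holds only policy by policy, which is how your Step 1 uses it. By splitting into an unconditional inequality (Step 2) and a conditional one (Step 3), your argument proves a correctly qualified statement where the paper's proof is incomplete. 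The hypothesis is not cosmetic. If there are policies $\pi_n$ with $J^{\pi_n}\to J>V(x_t)$ and $S^{\pi_n}\to 0$ from below, then $\hat{V}(x_t,z)\ge 0$ for every $z<J$, so $\sup\mathcal{Z}\ge J>V(x_t)$. Such sequences exist already with compact actions and continuous $f,r$ once $\ell$ is allowed to fail upper semicontinuity.

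Three refinements are worth making.

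\emph{The strict case is usually vacuous.} Whenever $\ell$ is bounded above (as in every environment of the paper), each trajectory satisfies $S^\pi\le\inf_{s\ge t}\gamma^{s-t}\sup_{\mathcal{X}}\ell\le 0$. Hence $\hat{V}\le 0$ everywhere and $\mathcal{Z}=\{z\mid \hat{V}(x_t,z)=0\}$. Your strict case in Step 3 is then vacuous, and the inequality $\sup\mathcal{Z}\le V(x_t)$ rests entirely on attainment; say this explicitly.

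\emph{Justify attainment in discrete time.} Instead of importing the standing assumptions of the continuous-time framework of Altarovici et al.\ (where existence of optimal controls typically requires a Filippov-type convexity condition), argue directly:
\begin{itemize}
\item With deterministic dynamics and fixed $x_t$, the supremum may be taken over open-loop sequences in $\mathcal{A}^{\mathbb{N}}$, which is compact in the product topology when $\mathcal{A}$ is compact.
\item Continuity of $f$ makes each $x_k$ continuous in the sequence.
\item Bounded continuous $r$ makes $J^\pi$ continuous, as a uniform limit of partial sums.
\item $S^\pi$ is an infimum of continuous (or merely upper semicontinuous, if $\ell$ is) functions, hence upper semicontinuous.
\end{itemize}
So $\min\{J^\pi-z,S^\pi\}$ attains its maximum.

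\emph{The fallback.} The relaxation is in the \emph{discounted} margin, so it tolerates violations $\ell(x_s)\ge-\varepsilon\gamma^{-(s-t)}$ that grow with $s$. Also, the inequality ``relaxed value $\le\sup\mathcal{Z}$'' needs its own short argument, since Step 2 only covers exactly feasible policies.
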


\begin{proof}
    We proceed by transforming the optimization problem in \eqref{eq:original_primal} using the epigraph form. Let $J(\pi; x_t) = \sum_{k=t}^{\infty} \gamma^{k-t} r(x_k)$ denote the return and $S(\pi; x_t) = \inf_{s \ge t} \ell(x_s)$ denote the safety margin of a policy. The constraint $\ell(x_k) \ge 0 \; \forall k$ is equivalent to $S(\pi; x_t) \ge 0$.
    
    The original problem can be rewritten by introducing the variable $z_t$ to represent the objective value:
    \begin{equation}
        \begin{aligned}
            V(x_t) = \sup_{\pi, z_t} \quad & z_t \\
            \text{s.t.} \quad & J(\pi; x_t) \ge z_t, \\
                              & S(\pi; x_t) \ge 0.
        \end{aligned}
    \end{equation}
    The two inequality constraints must hold simultaneously. This is logically equivalent to requiring the minimum of the constraint residuals to be non-negative. Note that $J(\pi; x_t) \ge z_t \iff J(\pi; x_t) - z_t \ge 0$. Thus, the constraints are satisfied if and only if:
    \begin{equation}
        \min \{ J(\pi; x_t) - z_t, \; S(\pi; x_t) \} \ge 0.
    \end{equation}
    Substituting this back into the optimization problem:
    \begin{equation}
        \begin{aligned}
            V(x_t) &= \sup_{z_t} \left\{ z_t \;\middle|\; \exists \pi \in \Pi \text{ s.t. } \min \{ J(\pi; x_t) - z_t, S(\pi; x_t) \} \ge 0 \right\} \\
                   &= \sup_{z_t} \left\{ z_t \;\middle|\; \sup_{\pi \in \Pi} \min \{ J(\pi; x_t) - z_t, S(\pi; x_t) \} \ge 0 \right\}.
        \end{aligned}
    \end{equation}
    To align this with the definition in \eqref{eq:epigraph_def}, we observe that the sign of the safety term is invariant under multiplication by a positive discount factor $\gamma^{s-t} > 0$. Therefore, $S(\pi; x_t) \ge 0$ is equivalent to $\inf_{s \ge t} \gamma^{s-t} \ell(x_s) \ge 0$.
    
    Replacing the safety term, the inner optimization becomes exactly $\hat{V}(x_t, z_t)$. Thus:
    \begin{equation}
        V(x_t) = \sup \{ z_t \in \mathbb{R} \mid \hat{V}(x_t, z_t) \ge 0 \}.
    \end{equation}
\end{proof}

\paragraph{Remark on Discounted Safety.}
In \eqref{eq:epigraph_def}, the safety constraint $\ell(x_s)$ is scaled by the discount factor $\gamma^{s-t}$. Since $\gamma \in (0, 1)$, this scaling does not alter the sign of the term, nor does it change the feasibility boundary (i.e., $\gamma^{s-t}\ell(x_s) \ge 0 \iff \ell(x_s) \ge 0$). This modification is introduced solely to ensure the resulting operator is a contraction, enabling the use of standard discrete-time Bellman recursions for learning $\hat{V}$.

\subsection{Theorem (\ref{thm: recursion})}
\label{appendix: aux_vfunc_rec_proof}

\begin{tcolorbox}[colback=gray!10, colframe=gray!80, boxrule=0.5pt, arc=4pt]
\textbf{Theorem \ref{thm: recursion}} (Epigraph Value Recursion) Consider a deterministic SC-MDP with bounded reward and discount factor $\gamma \in (0,1)$. Given a transition $(x_t,a_t,r_t,\ell_t,x_{t+1})$ and the epigraph update $z_{t+1}=(z_t-r(x_t))/\gamma$, the epigraph value function $\hat{V}(x_t, z_t)$ satisfies the recursion
\begin{equation*}
\hat{V}(x_t,z_t)
=
\min\big\{
\ell(x_t),
\;
\gamma \hat{V}(x_{t+1},z_{t+1})
\big\}.
\end{equation*}
\end{tcolorbox}

\begin{proof}
We begin from the definition of the auxiliary value function in \eqref{eq:epigraph-stage2}. For any policy $\pi$ and initial state
$x_t$, decompose the two terms inside the minimum.

First, the discounted return term admits the decomposition
\begin{equation}
\sum_{k=t}^{\infty}\gamma^{k-t} r(x_k) - z_t
=
\gamma\Big(
\sum_{k=t+1}^{\infty}\gamma^{k-(t+1)} r(x_k) - z_{t+1}
\Big),
\end{equation}
where the relation follows from the definition
$z_{t+1}=(z_t-r(x_t))/\gamma$.

Second, the discounted safety term satisfies
\begin{equation}
\min_{s\ge t}\gamma^{s-t}\ell(x_s)
=
\min\big\{
\ell(x_t),
\;
\gamma \min_{s\ge t+1}\gamma^{s-(t+1)}\ell(x_s)
\big\}.
\end{equation}

Substituting these decompositions into
\eqref{eq:epigraph-stage2} yields
\begin{equation}
\hat{V}(x_t,z_t)
=
\sup_{\pi}
\min\Big\{
\gamma
\big(
\sum_{k=t+1}^{\infty}\gamma^{k-(t+1)} r(x_k) - z_{t+1}
\big),
\;
\ell(x_t),
\;
\gamma \min_{s\ge t+1}\gamma^{s-(t+1)}\ell(x_s)
\Big\}.
\end{equation}

Since $\ell(x_t)$ is independent of future actions, and $\gamma>0$,
the minimum can be reordered as
\begin{equation}
\hat{V}(x_t,z_t)
=
\min\Big\{
\ell(x_t),
\;
\gamma \sup_{\pi}
\min\Big(
\sum_{k=t+1}^{\infty}\gamma^{k-(t+1)} r(x_k) - z_{t+1},
\;
\min_{s\ge t+1}\gamma^{s-(t+1)}\ell(x_s)
\Big)
\Big\}.
\end{equation}

Recognizing the inner supremum as the definition of
$\hat{V}(x_{t+1},z_{t+1})$ completes the proof and yields the recursion
\eqref{eq:V_hat_recursion_final}.
\end{proof}
\nocite{tayal2025v}

\subsection{Effect of the Decomposition Regularizer on Epigraph Sensitivity}
\label{appendix: eff_reg}

This appendix analyzes the effect of the decomposition-based regularizer
(Eq.~\eqref{eq:L2_regulariser}) on the learned auxiliary epigraph value function
$\hat{V}(x,z)$. Figure~\ref{fig:unreg_vs_reg_vhat} shows contour plots of $\hat{V}$
evaluated at multiple values of $z$, comparing the unregularized objective with
the regularized variant using $\lambda=0.25$. In the absence of regularization,
the learned $\hat{V}$ exhibits nearly identical contour geometry across different
$z$, indicating weak sensitivity to the epigraph variable. This behavior is
consistent with the analysis in Section \ref{section: method}, i.e., although $z_t$ appears in the
recursion, the Bellman update does not explicitly enforce conditioning on $z$,
allowing the network to collapse to a representation that largely ignores the
epigraph dimension.

Introducing the decomposition regularizer restores meaningful dependence on $z$.
The regularizer enforces the inequality
$\hat{V}(x,z)\le \min\{V_r(x)-z,\,V_s(x)\}$, which injects an explicit linear
dependence on $z$ through the $V_r(x)-z$ term. As a result, the learned $\hat{V}$
must adapt as $z$ varies in order to remain feasible, leading to visibly distinct
level sets across different $z$ values in the regularized case. This effect
prevents uncontrolled drift along the epigraph dimension and ensures that
$\hat{V}$ encodes a nontrivial performance envelope.

Crucially, restoring sensitivity to $z$ is necessary for reliable extraction of
the state-wise threshold $z^\star(x)=\sup\{z\mid\hat{V}(x,z)\ge 0\}$. When $\hat{V}$ is nearly invariant to $z$, this threshold becomes ill-defined and unstable, undermining policy synthesis. The regularizer therefore plays a structural role as it does not impose additional conservatism, but instead ensures that the learned auxiliary value function preserves the intended semantics of the epigraph
formulation. Additionally, an ablation study on different values of $\lambda$ is provided in Section~\ref{appendix:reg_ablation}.

\begin{figure}[h]
    \centering
    \includegraphics[width=0.98\linewidth]{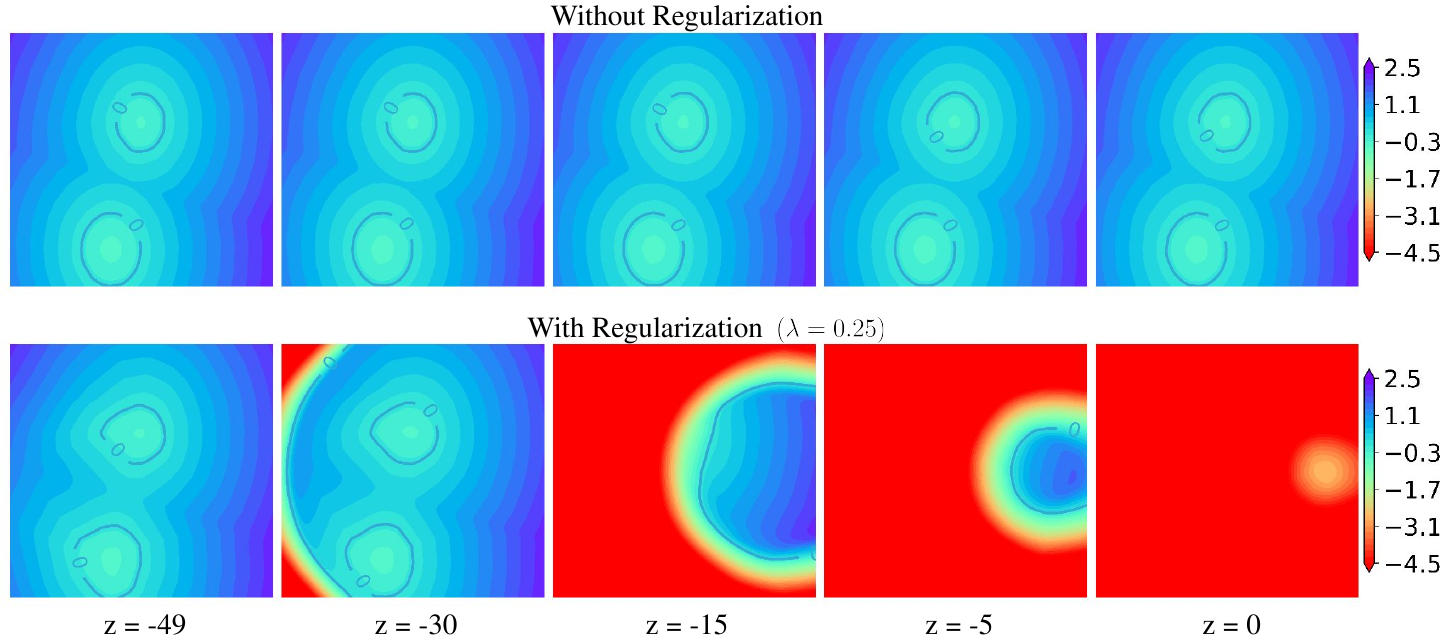}
    \caption{\textbf{Auxiliary epigraph value function $\hat{V}_{\theta}(x,z)$}. (\emph{Top}) Learned without the decomposition regularizer, where $\hat{V}_{\theta}$ exhibits weak sensitivity to the epigraph variable $z$ and similar contour structure across different $z$ values. (\emph{Bottom}) Learned with the regularizer ($\lambda=0.25$), which restores meaningful dependence on $z$ and yields distinct level sets consistent with the epigraph formulation.}
    \label{fig:unreg_vs_reg_vhat}
\end{figure}

\textbf{\textit{Remark:}}
\textit{We would like to emphasize that the decomposition regularizer loss $\mathcal{L}_{reg}$ and the Bellman recursive loss $\mathcal{L}_{\hat{V}}$ play distinct, non-redundant roles. The regularizer acts as a one-sided penalty that defines the valid search space. In the absence of the recursive loss $\mathcal{L}_{\hat{V}}$, the optimization would degenerate to a trivial solution where $\hat{V}(x,z) \rightarrow -\infty$, satisfying the inequality strictly but vacously. The Bellman recursion is therefore essential, as it drives the value estimate upward to identify the tightest feasible boundary supported by the dataset dynamics.}

\subsection{Theorem (\ref{thm:epigraph-policy})}\label{appendix:proof_epigraph-policy}

\begin{tcolorbox}[colback=gray!10, colframe=gray!80, boxrule=0.5pt, arc=4pt]
\textbf{Theorem \ref{thm:epigraph-policy}} (Epigraph-Guided Optimal Policy) 
For any state $x$ such that $\pi_\beta(a\mid x)>0$ on the action support, the
solution of \eqref{eq:policy-obj} is given by the exponential tilting
\begin{equation*}
\pi^\star(a\mid x)
\propto
\pi_\beta(a\mid x)\exp\big(\alpha \hat{A}(x,a;z^\star(x))\big)
\end{equation*}
\end{tcolorbox}

\begin{proof}
    To find the optimal policy $\pi^\star(\cdot | x)$ that maximizes the objective in \eqref{eq:policy-obj}, we consider the Lagrangian $\mathcal{L}$ with the multiplier $\lambda$ for the normalization constraint $\int \pi(a|x) da = 1$:
\begin{equation*}
    \mathcal{L}(\pi, \lambda) = \int \pi(a|x) \hat{A}(x, a; z^\star(x)) da - \frac{1}{\alpha} \int \pi(a|x) \log \frac{\pi(a|x)}{\pi_\beta(a|x)} da + \lambda \left( 1 - \int \pi(a|x) da \right).
\end{equation*}

Taking the functional derivative with respect to $\pi(a|x)$ and setting it to zero gives:
\begin{equation*}
    \frac{\partial \mathcal{L}}{\partial \pi(a|x)} = \hat{A}(x, a; z^\star(x)) - \frac{1}{\alpha} \left( \log \frac{\pi(a|x)}{\pi_\beta(a|x)} + 1 \right) - \lambda = 0.
\end{equation*}

Rearranging the terms to isolate the log-ratio, we obtain:
\begin{equation*}
    \log \frac{\pi(a|x)}{\pi_\beta(a|x)} = \alpha \hat{A}(x, a; z^\star(x)) - (\alpha \lambda + 1).
\end{equation*}

Applying the exponential function to both sides results in:
\begin{equation*}
    \pi(a|x) = \pi_\beta(a|x) \exp \left( \alpha \hat{A}(x, a; z^\star(x)) \right) \exp(- \alpha \lambda - 1).
\end{equation*}

Since $\exp(- \alpha \lambda - 1)$ is constant with respect to the action $a$, it serves as the partition function $1/Z(x)$ to ensure the distribution integrates to one. Thus, we arrive at the proportional form:
\begin{equation*}
    \pi^\star(a | x) \propto \pi_\beta(a | x) \exp \left( \alpha \hat{A}(x, a; z^\star(x)) \right).
\end{equation*}
\end{proof}

\subsection{Theorem (\ref{thm:fm-recovery})}\label{appendix:proof_fm_recovery}

Before presenting the proof for theorem \ref{thm:fm-recovery}, we first establish known concepts for Flow Matching that will be used to prove the theorem.

\paragraph{Preliminaries on Flow Matching.}
\label{appendix:flow_matching}

Flow Matching \citep{lipman2023flow, liu2024instaflow, albergo2023building} has emerged as a robust framework for training Continuous Normalizing Flows (CNFs) without the need for expensive simulation during training. While diffusion-based models have shown great success, Flow Matching often demonstrates superior empirical performance and improved sampling efficiency by learning straighter trajectories in the probability space.

Let $q_1(\boldsymbol{x}_1)$ be the unknown data distribution from which we have a dataset $\mathcal{D}=\{\boldsymbol{x}_1^{i}\}_{i=1}^D$. The goal of Flow Matching is to construct a time-dependent vector field $\boldsymbol{v}_\theta(\boldsymbol{x}_t, t): \mathbb{R}^d \times [0, 1] \to \mathbb{R}^d$ that defines a probability path $p_t$ between a simple noise prior $p_0(\boldsymbol{x}_0) = \mathcal{N}(\boldsymbol{0}, \boldsymbol{I})$ and the target data distribution $q_1$. 

The generative process is characterized by an Ordinary Differential Equation (ODE):
\begin{equation}
    \label{eq:fm_ode_appendix}
    \frac{{\rm d}\boldsymbol{x}_t}{{\rm d}t} = \boldsymbol{v}_\theta(\boldsymbol{x}_t, t), \quad t \in [0, 1].
\end{equation}
Starting from $\boldsymbol{x}_0 \sim p_0$, the solution to this ODE at $t=1$ yields a sample $\boldsymbol{x}_1 \sim p_1 \approx q_1$. 

To bypass the intractable marginal vector field during training, Flow Matching employs the Conditional Flow Matching (CFM) objective. We define a conditional probability path $p_t(\boldsymbol{x}_t|\boldsymbol{x}_1)$ and a corresponding conditional vector field $\boldsymbol{u}_t(\boldsymbol{x}_t|\boldsymbol{x}_1)$. For the case of \emph{Optimal Transport} (OT), which yields straight-line paths, the conditional path and vector field are defined as:
\begin{align}
    \boldsymbol{x}_t = \psi_t(\boldsymbol{x}_1) &= (1-t)\boldsymbol{x}_0 + t\boldsymbol{x}_1, \\
    \boldsymbol{u}_t(\boldsymbol{x}_t|\boldsymbol{x}_1) &= \boldsymbol{x}_1 - \boldsymbol{x}_0.
\end{align}
The network $\boldsymbol{v}_\theta$ is optimized to match this conditional velocity via the following MSE loss:
\begin{equation}
\label{eq:cfm_loss_appendix}
    \mathcal{L}_{\text{CFM}}(\theta) = \mathbb{E}_{t \sim \mathcal{U}([0, 1]), \boldsymbol{x}_1 \sim q_1, \boldsymbol{x}_0 \sim p_0} \left[ \| \boldsymbol{v}_\theta(\boldsymbol{x}_t, t) - (\boldsymbol{x}_1 - \boldsymbol{x}_0) \|_2^2 \right].
\end{equation}
Upon convergence, the learned vector field $\boldsymbol{v}_\theta$ approximates the marginal vector field $\mathbb{E}_{q_1} [\boldsymbol{u}_t(\boldsymbol{x}_t|\boldsymbol{x}_1) \frac{p_t(\boldsymbol{x}_t|\boldsymbol{x}_1)}{p_t(\boldsymbol{x}_t)}]$ \citep{lipman2023flow}.

Next we will proceed with proving the theorem \ref{thm:fm-recovery}.

\begin{tcolorbox}[colback=gray!10, colframe=gray!80, boxrule=0.5pt, arc=4pt]
\textbf{Theorem \ref{thm:fm-recovery}} (Weighted Flow Matching Policy Recovery) 
Let $p_0(a)$ denote a base distribution and let $v_\zeta(a,x,t)$ define a
time-dependent vector field inducing the ODE
\begin{equation*}
\frac{d a_t}{d t} = v_\zeta(a_t, x, t), \qquad a_0 \sim p_0.
\end{equation*}
Let $p_t^\zeta(a\mid x)$ denote the conditional density induced by this flow at
time $t$. Assume sufficient model capacity and exact minimization of the
weighted Flow Matching objective \eqref{eq:fm-loss}. Then the terminal density
$p_1^\zeta(a\mid x)$ satisfies
\begin{equation*}
p_1^\zeta(a\mid x) \propto p_\beta(a\mid x)\exp\big(\alpha \hat{A}(x,a;z^\star(x))\big),
\end{equation*}
which coincides with the epigraph-guided optimal policy
$\pi^\star(a\mid x)$ defined in \eqref{eq:exp-tilt}.
\end{tcolorbox}

\begin{proof}
We demonstrate that the terminal density induced by minimizing the weighted Flow Matching objective recovers the epigraph-guided optimal policy.

Given a dataset $\mathcal{D}$ where actions are distributed according to the behavior policy $\pi_\beta(a|x)$, we define the weighted Flow Matching (WFM) objective as:
\begin{equation}
\mathcal{L}_{\text{FM}}(\zeta) = \mathbb{E}_{x \sim \mathcal{D}, a \sim \pi_\beta(\cdot|x), t \sim \mathcal{U}[0,1], \epsilon \sim p(\epsilon)} \left[ w(x, a) \left\| v_\zeta(a_t, x, t) - u_t(a_t | a) \right\|_2^2 \right] \label{eq:wfm_objective}
\end{equation}
where $a_t = \psi_t(a, \epsilon)$ is a point along a predefined probability path from noise to the data action, $u_t(a_t | a) = \frac{d}{dt} \psi_t(a, \epsilon)$ is the target vector field \citep{lipman2023flow}, and the epigraph-guided weight is defined as $w(x, a) = \exp(\alpha \hat{A}(x, a; z^*))$ (Theorem \ref{thm:epigraph-policy}).

For a fixed state $x$ and time $t$, the objective in \eqref{eq:wfm_objective} is a weighted mean squared error. Assuming sufficient model capacity, the global minimizer $v^*$ is the weighted conditional expectation of the target vector field:
\begin{equation}
v^*(a_t, x, t) = \frac{\mathbb{E}_{a \sim \pi_\beta, \epsilon \sim p(\epsilon)} \left[ w(x, a) \cdot \mathbbm{1}(\psi_t(a, \epsilon) = a_t) \cdot u_t(a_t | a) \right]}{\mathbb{E}_{a \sim \pi_\beta, \epsilon \sim p(\epsilon)} \left[ w(x, a) \cdot \mathbbm{1}(\psi_t(a, \epsilon) = a_t) \right]}
\end{equation}
Using the properties of the conditional probability path $p_t(a_t | a)$, we express these expectations as integrals over the action space:
\begin{equation}
v^*(a_t, x, t) = \frac{\int \pi_\beta(a|x) w(x, a) p_t(a_t | a) u_t(a_t | a) da}{\int \pi_\beta(a|x) w(x, a) p_t(a_t | a) da} \label{eq:optimal_v_integral}
\end{equation}

\paragraph{Relation to the Continuity Equation.}
Let $q_t(a_t | x)$ be the time-dependent marginal density induced by the behavior policy and the epigraph-guided weights:
\begin{equation}
q_t(a_t | x) = \int \pi_\beta(a|x) w(x, a) p_t(a_t | a) da
\end{equation}
From the principle of Flow Matching \cite{lipman2023flow}, a vector field $v$ generates a density path $p_t$ if it satisfies the continuity equation $\frac{\partial p_t}{\partial t} + \nabla \cdot (p_t v) = 0$. The vector field $v^*$ defined in \eqref{eq:optimal_v_integral} is the unique field that generates the weighted marginal path $q_t(a_t | x)$, normalized by the state-dependent partition function $Z(x) = \int \pi_\beta(a|x) w(x, a) da$. Thus, the flow $\frac{da_t}{dt} = v^*(a_t, x, t)$ induces the density:
\begin{equation}
p_t^\zeta(a_t | x) = \frac{q_t(a_t | x)}{\int q_t(a' | x) da'} = \frac{\int \pi_\beta(a|x) w(x, a) p_t(a_t | a) da}{\int \pi_\beta(a|x) w(x, a) da}
\end{equation}

\paragraph{Convergence to Terminal Density.}
The probability path $\psi_t$ is constructed such that at $t=0$, $p_0$ is a base distribution, and at $t=1$, the conditional density $p_1(a_1 | a)$ approaches a Dirac delta distribution $\delta(a_1 - a)$. Taking the limit as $t \to 1$:
\begin{equation}
p_1^\zeta(a | x) = \lim_{t \to 1} \frac{\int \pi_\beta(a'|x) w(x, a') p_t(a | a') da'}{\int \pi_\beta(a'|x) w(x, a') da'}
\end{equation}
Applying the delta property simplifies the numerator:
\begin{equation}
p_1^\zeta(a | x) = \frac{\pi_\beta(a|x) w(x, a)}{\int \pi_\beta(a'|x) w(x, a') da'}
\end{equation}
Substituting $w(x, a) = \exp(\alpha \hat{A}(x, a; z^*))$ from Section \ref{section: policy-synthesis}:
\begin{equation}
p_1^\zeta(a | x) = \frac{\pi_\beta(a|x) \exp(\alpha \hat{A}(x, a; z^*))}{\int \pi_\beta(a'|x) \exp(\alpha \hat{A}(x, a'; z^*)) da'}
\end{equation}
This establishes the proportionality $p_1^\zeta(a | x) \propto \pi_\beta(a | x) \exp(\alpha \hat{A}(x, a; z^*))$. By Theorem 5.1, this terminal density matches the exact form of the epigraph-guided optimal policy $\pi^*(a | x)$.
\end{proof}

\nocite{zhang2025EWFM, alles2025flowq}

Based on the above theorems and proofs, following is the practical algorithm for learning the Flow Matching based policy in the Algorithm block \ref{alg:epi_flow_policy}.

\begin{algorithm}[H]
   \caption{Epigraph-Guided Flow Matching Policy Extraction}
   \label{alg:epi_flow_policy}
\begin{algorithmic}[1]
   \REQUIRE Offline dataset $\mathcal{D}$, auxiliary value functions $\hat{V}_{\theta}$ and $\hat{Q}_{\theta}$, temperature $\alpha$, Gaussian prior $p_0$.
   \ENSURE Learned vector field $v_{\zeta}$ for optimal policy $\pi^*$.
   \STATE Initialize policy parameters $\zeta$.
   \WHILE{not converged}
   \STATE Sample transitions $(x, a) \sim \mathcal{D}$.
   \STATE Find performance envelope $z^*(x) = \sup \{ z \mid \hat{V}_{\theta}(x, z) \ge 0 \}$ via binary search
   \STATE Sample time $t \sim \mathcal{U}[0, 1]$ and noise $\epsilon \sim p_0$
   \STATE Construct $a_t = (1-t)\epsilon + ta$ and target velocity $u_t = a - \epsilon$
   \STATE Compute guidance weight $w = \exp(\alpha \hat{A}(x, a; z^*(x)))$
   \STATE Update $\zeta \leftarrow \zeta - \eta \nabla_{\zeta} w \| v_{\zeta}(a_t, x, t) - u_t \|^2$
   \ENDWHILE
\end{algorithmic}
\end{algorithm}

\subsection{Expectile Regression}\label{appendix: expectile_regression}
Expectile regression is a classical tool in statistics and econometrics for estimating asymmetric statistics of a random variable. For a random variable $X$, the $\tau$-expectile is defined as the minimizer of the asymmetric least-squares problem
\begin{equation}
m_{\tau}
=\arg\min_{m}
\mathbb{E}_{x\sim X}\left[
L^{\tau}(x - m)
\right],
\end{equation}
where $L_{\tau}(y)=\lvert \tau - \mathbf{1}(y < 0)\rvert y^{2}$.

When $\tau > 0.5$, this loss assigns more weight to samples above the estimate $m_{\tau}$ and less weight to samples below it. Conversely, $\tau < 0.5$ emphasizes lower values. Thus, expectiles interpolate smoothly between the mean ($\tau = 0.5$) and a “high-value–seeking" statistic as $\tau \rightarrow 1$.

Expectile regression can also be extended to learning conditional expectiles:
\begin{equation}
m_{\tau}(x)
=\arg\min_{m(\cdot)}
\mathbb{E}_{(x,y)\sim\mathcal{D}}
\left[
L^{\tau}(y - m(x))
\right],
\end{equation}
which can be optimized efficiently via stochastic gradient descent. This makes expectiles easy to implement in modern machine-learning pipelines, unlike alternative high-order statistics that require specialized solvers.

\begin{figure*}
    \centering
    \includegraphics[width=0.99\linewidth]{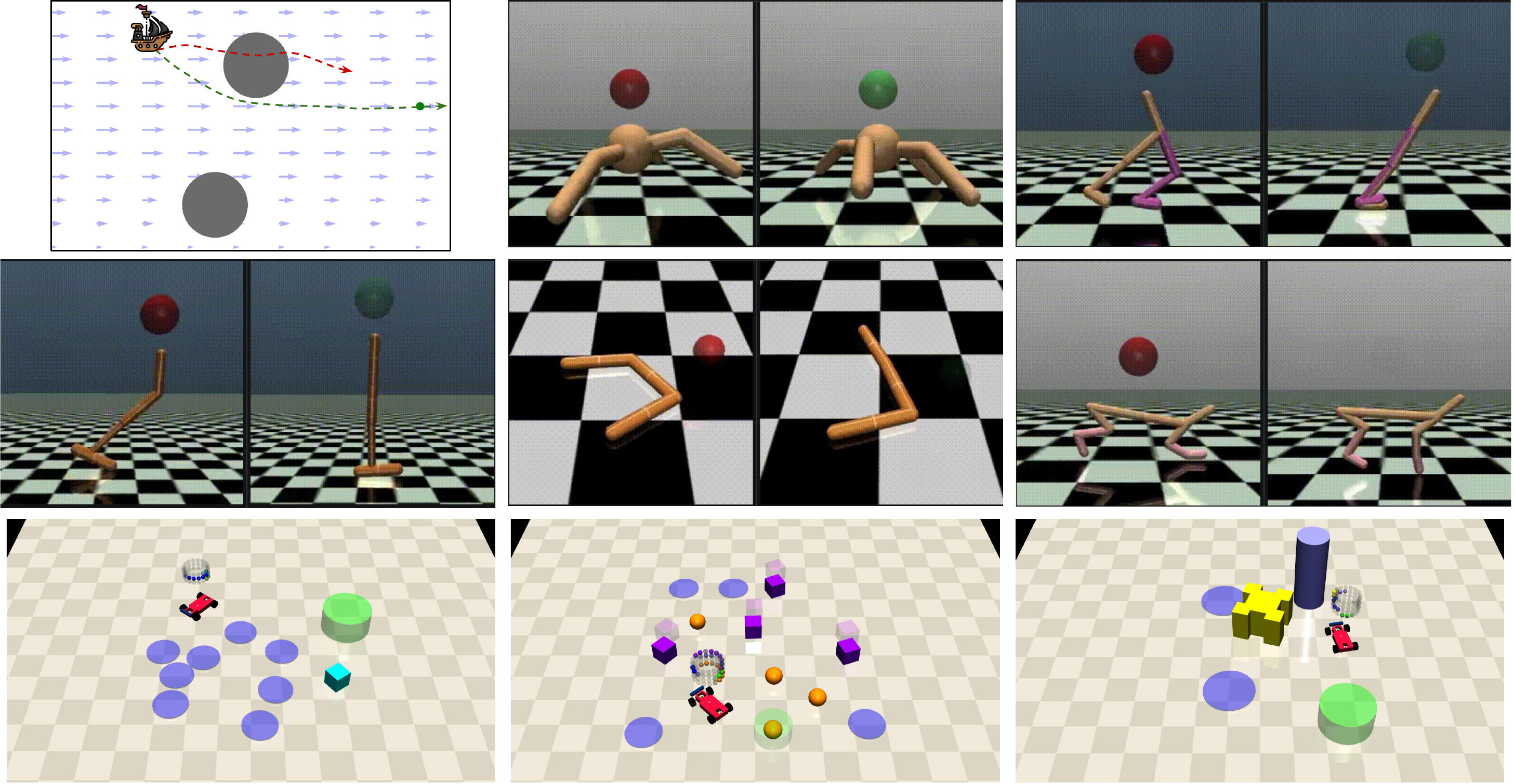}
    \caption{\textbf{Illustration of all evaluation environments.} (\textit{Top-Left}) environment illustrates Boat Navigation with obstacles and a goal point. (\textit{Middle}) illustrates SafeVelocity MuJoCo environments with \textbf{\textcolor{red}{Red}} sphere denoting unsafe state. (\textit{Bottom}) depicts the SafeCarNavigation MuJoCo environments with various obstacles and objectives.}
    \label{fig: all_envs}
\end{figure*}

\section{Description of the Experiments} \label{appendix: env_desc}

\subsection{Boat Navigation} \label{appendix: Boat2D}
The state $x \in \mathcal{X} = [-3, 2]\times[-2, 2]$ of the 2D Boat system are $x = [x_1, x_2]^T$, where, $x_1, x_2$ are the $x$ and $y$ coordinates of the boat respectively. We define the step reward at each step, $r(x)$ as:  
\begin{equation}
 \begin{aligned}
     r(x) = C\cdot\Bigg(-\left\|
    \begin{bmatrix} x_1 \\ x_2 \end{bmatrix}
    -
    \begin{bmatrix} x_{g1} \\ x_{g2} \end{bmatrix}
    \right\| \Bigg)
   \end{aligned}
\end{equation}
where $[x_{g1}, x_{g2}]^T = [0.5, 0.0]^T$ denotes the goal location, and $C = 0.1$. Maximising this reward drives the boat toward the goal. Consequently, the (augmented) dynamics of the 2D Boat system are:
\begin{align*}
    (x_1)_{t+1} &= (x_1)_{t} + (a_1 + 2 - 0.5 x_2^2) \cdot \Delta t \\
    (x_2)_{t+1} &= (x_2)_{t} + a_2 \cdot \Delta t \\
    z_{t+1} &= (z_t - r(x))/\gamma
\end{align*}
where $\Delta t$ represents discrete time step, $a_1, a_2$ represent the velocity control action in $x_1$ and $x_2$ directions respectively, with $a_1^2 + a_2^2 \leq 1$
and $2 - 0.5x_2^2$ specifies the current drift along the $x_1$-axis. The safety constraints are encoded as:  
\begin{align}
    \ell(x) := min ( \|x - (-0.5, 0.5)^T \| - 0.4, \|x - (-1.0, -1.2)^T \| - 0.4)
\end{align}
where $\ell(x) < 0$ indicates that the boat is inside a obstacle, thereby ensuring that the sub-level set of $\ell(x)$ defines the failure region.

\paragraph{Offline Data Generation.}
Since this is a custom environment, we construct an offline dataset which can be used to train a safe and performant policy. We sample $2500$ initial states uniformly from $\mathcal{X}$ and simulate each trajectory for $400$ discrete timesteps with step size $\Delta t = 0.005s$. During data collection, control inputs are drawn uniformly at random from the admissible range, ensuring diverse system trajectories for learning the controller.

\subsection{Safety Gymnasium Environments:}
To evaluate scalability to higher-dimensional systems we use Safety Gymnasium \cite{ji2023safety}. We consider (i) \emph{safe velocity} variants that impose velocity constraints on Gymnasium MuJoCo-v4 agents (HalfCheetah, Hopper, Ant, Walker2D, Swimmer), and (ii) car navigation tasks (CarButton1, CarGoal1, CarPush1) focused on collision avoidance.

The agent receives a scalar cost signal on the basis of its current state at each step.
This cost can be framed as a sparse and simple safety function
\begin{equation}
\ell(x)=\begin{cases}
10 & \text{if }\mathrm{cost}=0,\\[4pt]
-10 & \text{if }\mathrm{cost}>0.
\end{cases}
\end{equation}
since $\ell(x) \ge 0$ corresponds to safe state. This safety value function provides us a sparse signal.

\section{Ablation Studies}\label{appendix: ablation_study}
We conduct a series of ablation studies to isolate and validate the key design choices of our method. The primary testbed for these analyses is the \emph{Safe Boat Navigation} environment, which provides a controlled yet challenging safety-critical setting. Its low-dimensional, 2D structure enables clear visualization and interpretation of policy behavior, while the presence of nonlinear river drift dynamics makes safe navigation nontrivial for existing baselines. While most ablations are centered on the Boat environment, we additionally include select higher-dimensional systems in specific studies to assess robustness and generalization beyond the primary domain. The following subsections detail these analyses.

\subsection{Sensitivity to Expectile parameter ($\tau$)} \label{appendix: tau-sensitivity}

As described in Section~\ref{section: method}, we employ a $\tau$-expectile regression objective to learn the auxiliary value function $\hat{V}(x,z)$, which is subsequently used for policy extraction in the SC-MDP setting. When $\tau = 0.5$, the expectile loss reduces to a symmetric mean-squared error objective, yielding an estimate that reflects the average safety and performance induced by the behavior policy. This \emph{behavior-induced} value function is typically overly conservative and fails to emphasize high-quality safe transitions, which is misaligned with the objectives of safety-constrained control.

To study the effect of asymmetric weighting, we compare $\tau = 0.5$ against larger expectile values that place increasing emphasis on the upper tail of the value distribution. Table~\ref{tab: tau_analysis} reports mean episode reward, safety rate, and average episode cost on the Boat Navigation environment for different $\tau$ settings.

\begin{table}[!h]
\centering
\small
\caption{Mean Reward, Safety Rate (\%), and Mean Cost reported for Boat Navigation by varying $\tau$ in the expectile regression.}
\begin{tabular}{lccccc}
\toprule
\makecell[c]{$\boldsymbol{\tau}$} 
  & \makecell[c]{\textbf{0.5} \\ \textbf{(Behaviour Policy)}}
  & \makecell[c]{\textbf{0.6}} 
  & \makecell[c]{\textbf{0.7}} 
  & \makecell[c]{\textbf{0.8}} 
  & \makecell[c]{\textbf{0.9}} \\
\midrule
\textbf{Mean Episode Reward} 
& -485.86 $\pm$ 0.74 
& -486.63 $\pm$ 0.88 
& -492.36 $\pm$ 0.21 
& -496.01 $\pm$ 0.58 
& \textbf{-496.10} $\pm$ \textbf{0.37} \\
\midrule
\textbf{Safety Rate (\%)} 
& 88.9 $\pm$ 0.23 
& 89.9 $\pm$ 0.54 
& 98.9 $\pm$ 1.01 
& 99.9 $\pm$ 0.18 
& \textbf{100.0} $\pm$ \textbf{0.0} \\
\midrule
\textbf{Mean Episode Cost} 
& 1.95 $\pm$ 0.19 
& 1.14 $\pm$ 0.44 
& 0.09 $\pm$ 0.03 
& 0.04 $\pm$ 0.05 
& \textbf{0.0} $\pm$ \textbf{0.0} \\
\bottomrule
\end{tabular}
\label{tab: tau_analysis}
\end{table}

We observe that setting $\tau = 0.5$ leads to the lowest safety rate, as the resulting value function underweights safety-critical regions that are infrequently represented in the offline dataset. As $\tau$ increases, safety performance improves steadily while reward remains competitive, reflecting stronger alignment with high-value demonstrated actions. Consequently, larger expectile values produce policies that more effectively capture the upper envelope of safe and performant behaviors.

In idealized settings—such as deterministic dynamics with minimal dataset noise, pushing $\tau \rightarrow 1$ concentrates the estimator on the extreme upper tail and can lead to the strongest empirical performance by prioritizing high-value next-state targets. However, this regime is brittle in practice as rare suboptimal actions or stochastic disturbances may generate overly optimistic transitions, causing extreme expectile estimators to overfit spurious outcomes.

Balancing this trade-off, we fix $\tau = 0.9$ in all experiments. This choice substantially up-weights high-value targets while preserving robustness to noise and outliers commonly present in offline datasets.

\subsection{Resilience to External Perturbations} \label{appendix: robust_control}
\begin{figure}[ht]
    \centering
    \includegraphics[width=0.98\linewidth]{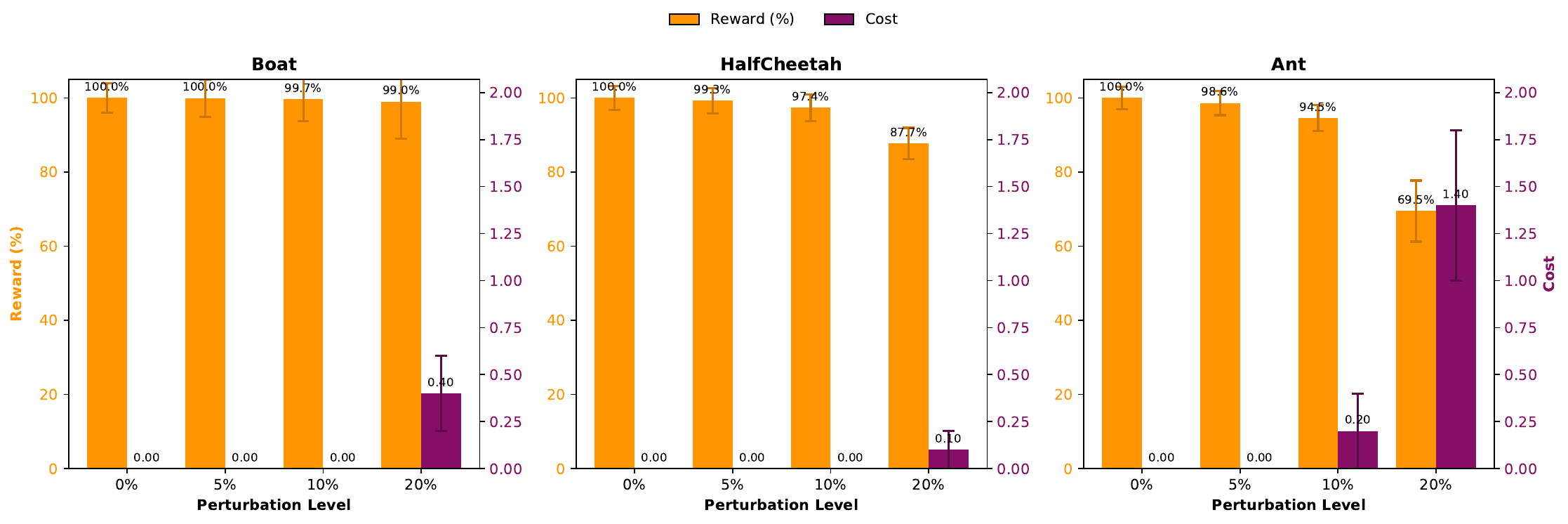}
    \caption{\textbf{Results for Control under Perturbations.} The plots show average reward (\%) achieved relative to no perturbations case, at different levels of perturbations (5\%, 10\% and 20\%), and their average absolute cost values respectively.}
    \label{fig: perturb_analysis}
\end{figure}

We next evaluate the robustness of the recovered policy to perturbations in the executed control action, which indirectly probes sensitivity to errors in the learned auxiliary value function. Specifically, we perturb the selected optimal action $a$ by additive zero-mean Gaussian noise $\mathcal{N}(0,\sigma^2)$, with $\sigma$ scaled relative to the maximum admissible action magnitude. We consider three perturbation levels—$5\%$, $10\%$, and $20\%$ of the action bound—and evaluate performance over 100 independent rollouts per setting, using five random seeds. In addition to the Boat Navigation task, we also include two higher-dimensional systems from Safety Gymnasium, namely Safe Velocity HalfCheetah and Ant, to assess robustness beyond the primary environment.

Figure~\ref{fig: perturb_analysis} summarizes reward and safety cost trends across perturbation levels. Across all environments, safety costs increase only mildly under small to moderate perturbations, indicating that the learned policies tolerate reasonable inaccuracies in control execution even in safety-critical regimes. While rewards degrade more noticeably at larger perturbation magnitudes, this degradation becomes significant only under aggressive noise injection. These results suggest that precise action execution improves performance, but our framework retains strong safety-aware behavior and competitive returns under moderate control uncertainty. Future work will investigate uncertainty-aware and robust optimization formulations that explicitly model and mitigate the effects of disturbances.

\subsection{Number of Action Samples}

\begin{figure}[h]
    \centering
    \includegraphics[width=0.98\linewidth]{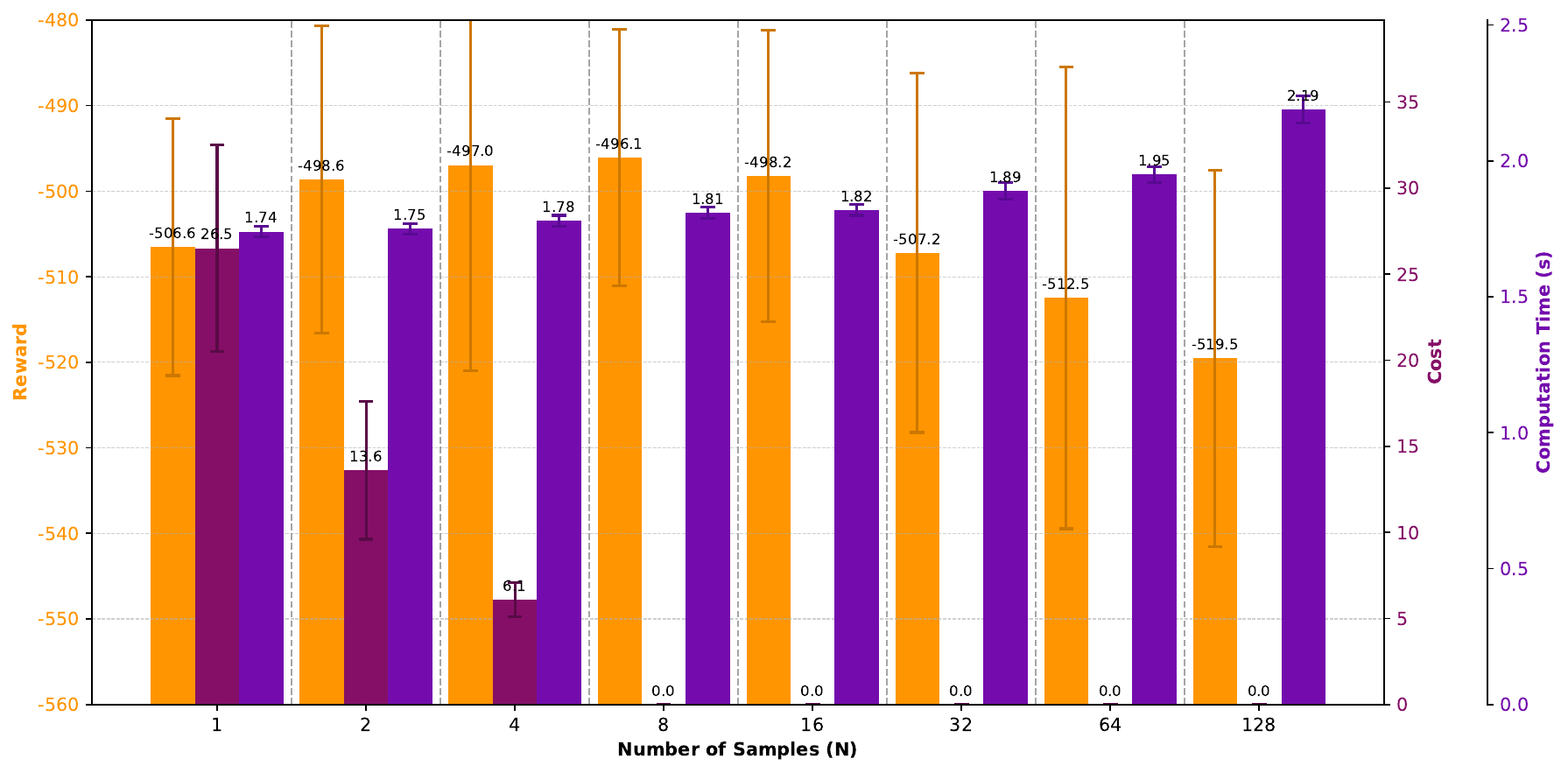}
    \caption{Ablations on number of action samples in Boat Navigation environment}
    \label{fig: action_sample_analysis}
\end{figure}

As described in Section~\ref{section: policy-synthesis}, our epigraph-guided policy recovery procedure assumes an idealized setting with unlimited model capacity and vanishing approximation error. In practice, such assumptions rarely hold, making direct recovery of the optimal policy challenging. Motivated by prior sample-based guided action selection methods ~\citep{park2025fql, hansen2023idql, chen2023offline}, we augment our weighted flow-matching policy with action resampling at inference time. Specifically, we draw $N$ candidate actions from the recovered policy and select the one maximizing the auxiliary value estimate $\hat{V}$.

To quantify the effect of sampling, we sweep the number of action candidates $N \in \{1,2,4,8,16,32,64,128\}$ during evaluation on the Boat Navigation environment. Figure ~\ref{fig: action_sample_analysis} reports reward, safety cost, and inference-time computation as functions of $N$. We observe that sampling improves performance up to a moderate level, with $N=8$ achieving the best trade-off: maximal reward, near zero safety violations, and reasonable computational overhead. Beyond this point, additional samples yield diminishing returns while increasing computation time at inference, suggesting that modest sampling suffices for effective policy refinement.

\subsection{Effect of the Regularization Constant $\lambda$ on $\hat{V}_{\theta}$}
\label{appendix:reg_ablation}

We study how the decomposition regularizer (term $\mathcal{L}_{reg}$ in
Eq.~\eqref{eq:total-loss}) affects the learned auxiliary epigraph value
$\hat{V}_{\theta}$ by sweeping $\lambda\in\{0.1,0.25,0.5,1.0\}$. Figure~\ref{fig: regularizer_analysis}
visualizes the inferred per-state budget $z^\ast(x)$ (the largest $z$ such that
$\hat{V}_{\theta}(x,z)\ge 0$) for the Boat Navigation task, and
Table~\ref{tab: lambda_analysis} summarizes downstream metrics (mean reward,
safety rate, mean episode cost).

\begin{figure}[ht]
    \centering
    \includegraphics[width=0.98\linewidth]{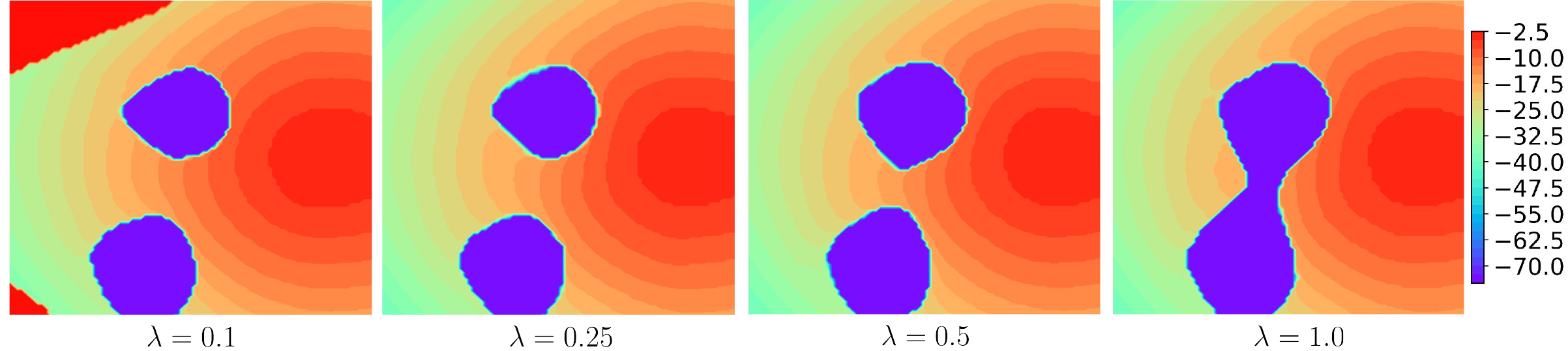}
    \caption{\textbf{Effect of the regularization weight $\lambda$ on the inferred epigraph budget.} For the Boat Navigation environment, each panel shows the state-wise optimal budget $z^{\ast}(x)$ defined by the feasibility condition $\hat{V}_{\theta}(x,z^{\ast}) \ge 0$. As $\lambda$ increases, the feasible region contracts and $z^{\ast}(x)$ becomes more conservative; states shaded in purple correspond to infeasible regions.}

    \label{fig: regularizer_analysis}
\end{figure}

The sweep reveals a clear bias–variance trade-off. For small regularization
($\lambda=0.1$) the decomposition constraint is too weak: $\hat{V}_{\theta}$
exhibits flattening and reduced sensitivity to $z$, causing inflated estimates
of feasibility (larger regions shown feasible) and correspondingly lower safety
rates. As $\lambda$ increases, the decomposition term more strongly enforces
the upper bound $\hat{V}\le\min\{V_r-z,\,V_s\}$, which reduces uncontrolled
drift along the $z$-axis and enlarges the infeasible region; this improves
safety but can reduce returns. In our experiments safety rates approach
$100\%$ for $\lambda\ge 0.25$, while very large $\lambda$ (e.g., $1.0$)
produces overly conservative feasibility sets and lower mean reward.

Balancing these effects, we select $\lambda=0.25$ for the main experiments:
it sufficiently prevents the $z$-insensitivity and budget-collapse observed at
low regularization, while avoiding the excessive conservatism seen at higher
weights. Practitioners should tune $\lambda$ on a validation set (a useful
range is roughly $0.1$--$0.5$) and inspect both sensitivity diagnostics (e.g.,
variation of $\hat{V}$ across $z$) and downstream trade-offs between reward
and safety when selecting the final value.

\begin{table}[!ht]
\centering
\small
\caption{Mean Reward, Safety Rate (\%), and Mean Cost reported for Boat Navigation by varying $\lambda$ in the regularizer loss term.}
\begin{tabular}{lcccc}
\toprule
\makecell[c]{$\boldsymbol{\lambda}$} 
  & \makecell[c]{\textbf{0.1}}
  & \makecell[c]{\textbf{0.25}} 
  & \makecell[c]{\textbf{0.5}} 
  & \makecell[c]{\textbf{1.0}} \\
\midrule
\textbf{Mean Episode Reward}
& -483.63 $\pm$ 0.31 
& \textbf{-496.10} $\pm$ \textbf{0.37} 
& -510.96 $\pm$ 0.58 
& -616.15 $\pm$ 0.48 \\
\midrule
\textbf{Safety Rate (\%)} 
& 87.9 $\pm$ 0.42 
& \textbf{100} $\pm$ \textbf{0.0} 
& 100 $\pm$ 0.0 
& 100.0 $\pm$ 0.0 \\
\midrule
\textbf{Mean Episode Cost}  
& 8.14 $\pm$ 0.76 
& \textbf{0.0} $\pm$ \textbf{0.0} 
& 0.0 $\pm$ 0.0 
& 0.0 $\pm$ 0.0 \\
\bottomrule
\end{tabular}
\label{tab: lambda_analysis}
\end{table}

\section{Experimental Details}
\subsection{Experimental Hardware}
To keep the evaluation fair and avoid any discriminatory added advantage to any specific experiment, all experiments were conducted on a single system equipped with an 14th Gen Intel Core i9-14900KS CPU, 128GB RAM, and an NVIDIA GeForce RTX 4090 GPU for training and experiment evaluations.

\subsection{Network Architecture and Training Details of the Proposed Algorithm}
We have compiled and listed down all the hyperparameters that we used to perform our experiments and report the results. These training settings for all the  environments are detailed in the Table~\ref{tab:training_details}. For the MuJoCo environments, we use the widely accepted DSRL \cite{liu2024dsrl} dataset. 

\begin{table}[ht]
    \caption{Hyperparameters for the Algorithm (EpiFlow).}
    \centering
    \begin{tabular}{lc}
        \hline
        \textbf{Hyperparameter} & \textbf{Value} \\
        \hline
        Network Architecture & Multi-Layer Perceptron (MLP) \\
        Activation Function & ReLU \\
        Optimizer & Adam optimizer \\
        Learning Rate & $3\times 10^{-4}$ \\
        Discount Factor ($\gamma$) & 0.99 \\
        No. of Action Candidates (N) & 8 \\
        Time Step Intervals (FM Policy) & 5 \\
        \hline
        \textbf{Boat Navigation} & \\
        \hline
        Number of Hidden Layers & 2 \\
        Hidden Layer Size & 256 neurons per layer \\
        Dataset Size & 1M \\
        \hline
        \textbf{Safe Velocity Hopper} & \\
        \hline
        Number of Hidden Layers & 3 \\
        Hidden Layer Size & 256 neurons per layer \\
        Dataset Size & 1.32M \\
        \hline
        \textbf{Safe Velocity Half-Cheetah} & \\
        \hline
        Number of Hidden Layers & 3 \\
        Hidden Layer Size & 128 neurons per layer \\
        Dataset Size & 249K \\
        \hline
        \textbf{Safe Velocity Ant} & \\
        \hline
        Number of Hidden Layers & 3 \\
        Hidden Layer Size & 256 neurons per layer \\
        Dataset Size & 2.09M \\
        \hline
        \textbf{Safe Velocity Walker2D} & \\
        \hline
        Number of Hidden Layers & 3 \\
        Hidden Layer Size & 256 neurons per layer \\
        Dataset Size & 2.12M \\
        \hline
        \textbf{Safe Velocity Swimmer} & \\
        \hline
        Number of Hidden Layers & 3 \\
        Hidden Layer Size & 256 neurons per layer \\
        Dataset Size & 1.68M \\
        \hline
    \end{tabular}
    \vspace{-1em}
    \label{tab:training_details}
\end{table}

During the training, we set the $\tau$ for expectile regression in section \ref{subsec: practical_loss} to 0.9. And we use clipped double Q-learning \citep{fujimoto2018addressing}, taking a minimum of two Q. We update the target Q networks using Exponential Moving Average (EMA) where the weight to the new parameters is set to. Following \citep{kostrikov2022offline}, we clip exponential advantages to $(-\infty, 100]$ in feasible part and $(-\infty, 150]$ in infeasible part. For our paper, we used flow matching implementation from IFQL/FQL \citep{park2025fql}.

\vspace{7em}
\subsection{Hyperparameters for the Baselines} 

Hyperparameters for the remaining safe offline RL baselines (COptiDICE, BEAR-Lag, BCQ-Lag, CPQ, C2IQL, FISOR) are provided in Table~\ref{tab:baselines_special_nets}. We use the official implementations of BEAR-Lag, BCQ-Lag, CPQ and COptiDICE from \cite{liu2024dsrl}, of C2IQL from \cite{liu2025ciql} and of FISOR from \cite{zheng2024fisor}, and train all methods on the same DSRL datasets.

\begin{table}[!htbp]
\centering
\small
\caption{Detailed Hyperparameters for Baseline Special Networks. (Layers, Units) notation refers to hidden layers and units per layer. All baselines utilize the DSRL dataset standards \cite{liu2024dsrl}.}
\label{tab:baselines_special_nets}
\begin{tabular}{lll}
\hline
\textbf{Baseline} & \textbf{Network Component} & \textbf{Architecture Specification} \\
\hline
\textbf{C2IQL} \citep{liu2025ciql} & Actor / Critic / Value Nets & MLP, (256, 2) \\
& \textbf{Cost Reconstruction Model} & \textbf{MLP, (5 layers, 512 units each)} \\
& Reward / Cost Advantage & MLP, (256, 2) \\
\hline
\textbf{BCQ-Lag} (Lag. dual of \citep{fujimoto2019off}) & \textbf{VAE (Encoder/Decoder)} & \textbf{MLP, (750, 2), Latent Dim: 2 $\times$ Action Dim} \\
& Perturbation Model $\xi$ & MLP, (400, 2) \\
& Double Q-Critics & MLP, (400, 2) \\
\hline
\textbf{CPQ} \citep{xu2022constraints} & Actor (Policy) Net & MLP, (256, 2) \\
& \textbf{Constraint-Penalized Q} & \textbf{Ensemble DoubleQCritic, (256, 2)} \\
& Cost Critic Net & MLP, (256, 2) \\
\hline
\textbf{FISOR} \citep{zheng2024fisor} & Diffusion Denoiser (Actor) & DiffusionDenoiserMLP, (256, 2) \\
& \textbf{Feasibility Classifier} & \textbf{MLP, (256, 2), Sigmoid Output} \\
& Energy/Value Guidance & MLP, (256, 2) \\
\hline
\textbf{COptiDICE} \citep{lee2022coptidice} & \textbf{Dual / $\nu$ Network} & \textbf{DualNet (Value-like), (256, 2)} \\
& Actor (Extraction) Net & MLP, (256, 2) \\
\hline
\textbf{BEAR-Lag} (Lag. dual of \citep{kumar2019stabilizing}) & \textbf{VAE (Support Model)} & \textbf{MLP, (750, 2)} \\
& Actor (Policy) Net & SquashedGaussianMLPActor, (256, 2) \\
& Cost / Reward Critics & MLP, (256, 2) \\
\hline
\end{tabular}
\end{table}